\newtheorem{example}{Example}[section]
\title{Stability Conditions for Online Learnability}
\author{St\'ephane Ross\\
Robotics Institute\\
Carnegie Mellon University \\
Pittsburgh, PA USA \\
\texttt{\small stephaneross@cmu.edu}
\And 
J. Andrew Bagnell\\
Robotics Institute\\
Carnegie Mellon University \\
Pittsburgh, PA USA \\
\texttt{\small dbagnell@ri.cmu.edu}
}
\begin{document}

\def\argmin{\mathop{\rm argmin}}
\def\Ldim{\mathop{\rm Ldim}}

\maketitle

\begin{abstract}
Stability is a general notion that quantifies the sensitivity of a learning algorithm's output to small change in the training dataset (e.g. deletion or replacement of a single training sample). Such conditions have recently been shown to be more powerful to characterize learnability in the general learning setting under i.i.d. samples where uniform convergence is not necessary for learnability, but where stability is both sufficient and necessary for learnability. We here show that similar stability conditions are also sufficient for online learnability, i.e. whether there exists a learning algorithm such that under any sequence of examples (potentially chosen adversarially) produces a sequence of hypotheses that has no regret in the limit with respect to the best hypothesis in hindsight. We introduce online stability, a stability condition related to uniform-leave-one-out stability in the batch setting, that is sufficient for online learnability. In particular we show that popular classes of online learners, namely algorithms that fall in the category of Follow-the-(Regularized)-Leader, Mirror Descent, gradient-based methods and randomized algorithms like Weighted Majority and Hedge, are guaranteed to have no regret if they have such online stability property. We provide examples that suggest the existence of an algorithm with such stability condition might in fact be necessary for online learnability. For the more restricted binary classification setting, we establish that such stability condition is in fact both sufficient and necessary. We also show that for a large class of online learnable problems in the general learning setting, namely those with a notion of sub-exponential covering, no-regret online algorithms that have such stability condition exists. 
\end{abstract}
\section{Introduction}
We consider the problem of online learning in a setting similar to the General Setting of Learning \citep{Vapnik95}.
In this setting, an online learning algorithm observes data points $z_1, z_2, \dots, z_{m} \in \mathcal{Z}$ in sequence, potentially chosen adversarially, and upon seeing $z_1, z_2, \dots, z_{i-1}$, the algorithm must pick a hypothesis $h_i \in \mathcal{H}$ that incurs loss on the next data point $z_i$. Given the known loss functional $f : \mathcal{H} \times \mathcal{Z} \rightarrow \mathbb{R}$, the regret $R_m$ of the sequence of hypotheses $h_{1:m}$ after observing $m$ data points is defined as:
\begin{equation} \label{eqRegret}
R_m = \sum_{i=1}^m f(h_i,z_i) - \min_{h \in \mathcal{H}} \sum_{i=1}^m f(h,z_i)
\end{equation}
The goal is to pick a sequence of hypotheses $h_{1:m}$ that has no regret, i.e. the average regret $\frac{R_m}{m} \rightarrow 0$ as the number of data points $m \rightarrow \infty$.

The setting we consider is general enough to subsume most, if not all, online learning problems. In fact the space $\mathcal{Z}$ of possible ``data points'' could itself be a function space $\mathcal{H} \rightarrow \mathbb{R}$, such that $f(h,z) = z(h)$. Hence the typical online learning setting where the adversary picks a loss function $\mathcal{H} \rightarrow \mathbb{R}$ at each time step is always subsumed by our setting. The data points $z$ should more loosely be interpreted as the parameters that define the loss function at the current time step. For instance, in a supervised classification scenario, the space $\mathcal{Z} = \mathcal{X} \times \mathcal{Y}$, for $\mathcal{X}$ the input features and $\mathcal{Y}$ the output class and the classification loss is defined as $f(h,(x,y)) = I(h(x) \neq y)$ for $I$ the indicator function. We do not make any assumption about $f$, other than that the maximum instantaneous regret is bounded: $\sup_{z \in \mathcal{Z}, h,h' \in \mathcal{H}}|f(h,z)-f(h',z)| \leq B$. This allows for potentially unbounded loss $f$: e.g., consider $z \in \mathbb{R}$, $h \in [-k,k]$ and $f(h,z) = |h-z|$, then the immediate loss is unbounded but instantaneous regret is bounded by $B = 2k$.

We are interested in characterizing sufficient conditions under which an online algorithm is guaranteed to pick a sequence of hypotheses that has no regret under any sequence of data points an adversary might pick. In the batch setting when the data points are drawn i.i.d. from some unknown distribution $\mathcal{D}$, \citet{ShalevShwartz2010,ShalevShwartz2009a} have shown that stability is a key property for learnability. In particular, they show that a problem is learnable if and only if there exists a universally stable asymptotic empirical risk minimizer (AERM).

In this paper, we consider using batch algorithms in our online setting, where the hypothesis $h_i$ is the output of the batch learning algorithm on the first $i-1$ data points. Many online algorithms (such as Follow-the-(Regularized)-Leader, Mirror Descent, Weighted Majority, Hedge, etc.) can be interpreted in this way. For instance, Follow-the-Leader (FTL) algorithms can be essentially thought as using a batch empirical risk minimizer (ERM) algorithm to select the hypothesis $h_i$ on the dataset $\{z_1,z_2,\dots,z_{i-1}\}$, while Follow-the-Regularized-Leader (FTRL) algorithms essentialy use a batch AERM algorithm (more precisely what we call a Regularized ERM (RERM)) to select the hypothesis $h_i$ on the dataset $\{z_1,z_2,\dots,z_{i-1}\}$. Our main result shows that Uniform Leave-One-Out stability \citep{ShalevShwartz2009a}, albeit stronger than the stability condition required in \citep{ShalevShwartz2010,ShalevShwartz2009a}, is in fact sufficient to guarantee no regret of RERM type algorithms. For asymmetric algorithms like gradient-based methods (which can also be seen as some form of RERM), a notion related to Uniform Leave-One-Out stability (and equivalent to for symmetric algorithms), which we call online stability, is also sufficient to guarantee no-regret. We also provide general results for the class of always-AERM algorithms (a slightly stronger notion than AERM but weaker than ERM and RERM). Unfortunately they are weaker in that they require the algorithm to be stable or an always-AERM at a fast enough rate.

The stronger notion of stability we use to guarantee no regret seems to be necessary in the online setting. Intuitively, this is because the algorithm must be able to compete on any sequence of data points, potentially chosen adversarially, rather than on i.i.d. sampled data points. We also provide an example that illustrates this. Namely, an AERM with a slightly weaker stability condition, can learn the problem in the batch setting but cannot in the online setting, however there is a FTRL algorithm that can learn the problem in the online setting. Furthermore, it is known that batch learnability and online learnability are not equivalent, which naturally suggests stronger notions of stability should be necessary for online learnability. We review a known problem of threshold learning over an interval that shows batch and online learnability are not equivalent. In the more restricted binary classification setting, we show that existence of a (potentially randomized) uniform-LOO stable RERM is both sufficient and necessary for online learnability. We also show that for a large class of online learnable problems in the general learning setting, namely those with a notion of sub-exponential covering, uniform-LOO stable (potentially randomized) RERM algorithms exist.

We begin by introducing notation, definitions and reviewing stability notions that have been used in the batch setting. We then provide our main results which show how some of these stability notions can be used to guarantee no regret in the online setting. We then go over examples that suggest such strong stability notion might in fact be necessary in the online setting. We further show that in the restricted binary classification setting, such stability notions are in fact necessary. We also introduce a notion of covering that allows us to show that uniform-LOO stable RERM algorithms exist for a large class of online learnable problems in the general learning setting. We conclude with potential future directions and open questions.
\section{Learnability and Stability in the Batch Setting} 
In the batch setting, a batch algorithm is given a set of $m$ i.i.d. samples $z_1, z_2, \dots, z_m$ drawn from some unknown distribution $\mathcal{D}$, and given knowledge of the loss functional $f$, we seek to find a hypothesis $h \in \mathcal{H}$ that minimizes the population risk:
\begin{equation}
F(h) = \mathbb{E}_{z \sim \mathcal{D}}[f(h,z)]
\end{equation}
Given a set of $m$ i.i.d. samples $S \sim \mathcal{D}^m$, the empirical risk of a hypothesis $h$ is defined as:
\begin{equation}
F_S(h) = \frac{1}{m}\sum_{i=1}^m[f(h,z_i)]
\end{equation}
Most batch algorithms used in practice proceed by minimizing the empirical risk, at least asymptotically (when an additional regularizer is used). 
\begin{definition}
An algorithm $\textbf{A}$ is an \textbf{Empirical Risk Minimizer (ERM)} if for any dataset $S$:
\begin{equation}
F_S(\textbf{A}(S)) = \min_{h \in \mathcal{H}} F_S(h)
\end{equation}
\end{definition}
\begin{definition}
\citep{ShalevShwartz2010} An algorithm $\textbf{A}$ is an \textbf{Asymptotic Empirical Risk Minimizer (AERM)} under distribution $\mathcal{D}$ at rate $\epsilon_{\textrm{erm}}(m)$ if for all $m$:
\begin{equation}
\mathbb{E}_{S \sim \mathcal{D}^m}[ F_S(\textbf{A}(S)) - \min_{h \in \mathcal{H}} F_S(h) ] \leq \epsilon_{\textrm{erm}}(m)
\end{equation}
\end{definition}
Whenever we mention a rate $\epsilon(m)$, we mean $\{\epsilon(m)\}_{m=0}^{\infty}$ is a monotonically non-increasing sequence that is $o(1)$, i.e. $\epsilon(m) \rightarrow 0$ as $m \rightarrow \infty$. If $\textbf{A}$ is an AERM under any distribution $\mathcal{D}$, then we say $\textbf{A}$ is a universal AERM. A useful notion for our online setting will be that of an always AERM, which is satisfied by common online learners such as FTRL:
\begin{definition}
\citep{ShalevShwartz2010} An algorithm $\textbf{A}$ is an \textbf{Always Asymptotic Empirical Risk Minimizer (always AERM)} at rate $\epsilon_{\textrm{erm}}(m)$ if for all $m$ and dataset $S$ of $m$ data points:
\begin{equation}
F_S(\textbf{A}(S)) - \min_{h \in \mathcal{H}} F_S(h) \leq \epsilon_{\textrm{erm}}(m)
\end{equation}
\end{definition}
Learnability in the batch setting is interested in analyzing the existence of algorithms that are universally consistent:
\begin{definition}
\citep{ShalevShwartz2010} An algorithm $\textbf{A}$ is said to be \textbf{universally consistent} at rate $\epsilon_{\textrm{cons}}(m)$ if for all $m$ and distribution $\mathcal{D}$:
\begin{equation}
\mathbb{E}_{S \sim \mathcal{D}^m}[ F(\textbf{A}(S)) - \min_{h \in \mathcal{H}} F(h) ] \leq \epsilon_{\textrm{cons}}(m)
\end{equation}
\end{definition}
If such algorithm $\textbf{A}$ exists, we say the problem is \textbf{learnable}. A well known result in the supervised classification and regression setting (i.e the loss $f(h,(x,y))$ is $I(h(x) \neq y)$ or $(h(x)-y)^2$) is that learnability is equivalent to uniform convergence of the empirical risk to the population risk over the class $\mathcal{H}$ \citep{Blumer89,Alon97}. This implies the problem is learnable using an ERM.

\citet{ShalevShwartz2010,ShalevShwartz2009a} recently showed that the situation is much more complex in the General Learning Setting considered here. For instance, there are convex optimization problems where uniform convergence does not hold that are learnable via an AERM, but not learnable via any ERM \citep{ShalevShwartz2010,ShalevShwartz2009a}. In the General Learning Setting, stability turns out to be a more suitable notion to characterize learnability than uniform convergence. 

Most statibility notions studied in the literature fall into two categories: leave-one-out (LOO) stability and replace-one (RO) stability. The former measures sensitivity of the algorithm to deletion of a single data point from the dataset, while the latter measures sensitivity of the algorithm to replacing one data point in the dataset by another. In general these two notions are incomparable and lead to significantly different results as we shall see below. We now review the most commonly used stability notions and some of the important results from the literature.
\subsection{Leave-One-Out Stability}
Most notions of LOO stability are measured in terms of change in the loss on a leave-one-out sample when looking at the output hypothesis trained with and without that sample in the dataset. The four commonly used notions of LOO stability (from strongest to weakest) are defined below. We use $z_i$ to denote the $i^{th}$ data point in the dataset $S$ and $S^{\backslash i}$ to denote the dataset $S$ with $z_i$ removed.
\begin{definition}
\citep{ShalevShwartz2009a} An algorithm $\textbf{A}$ is \textbf{uniform-LOO Stable} at rate $\epsilon_{\textrm{loo-stable}}(m)$ if for all $m$, dataset $S$ of size $m$ and index $i \in \{1,2,\dots,m\}$:
\begin{equation}
|f(\textbf{A}(S^{\backslash i}),z_i) - f(\textbf{A}(S),z_i)| \leq \epsilon_{\textrm{loo-stable}}(m)
\end{equation}
\end{definition}
\begin{definition}
\citep{ShalevShwartz2009a} An algorithm $\textbf{A}$ is \textbf{all-i-LOO Stable} under distribution $\mathcal{D}$ at rate $\epsilon_{\textrm{loo-stable}}(m)$ if for all $m$ and index $i \in \{1,2,\dots,m\}$:
\begin{equation}
\mathbb{E}_{S \sim D^m}[|f(\textbf{A}(S^{\backslash i}),z_i) - f(\textbf{A}(S),z_i)|] \leq \epsilon_{\textrm{loo-stable}}(m)
\end{equation}
\end{definition}
\begin{definition}
\citep{ShalevShwartz2009a} An algorithm $\textbf{A}$ is \textbf{LOO Stable} under distribution $\mathcal{D}$ at rate $\epsilon_{\textrm{loo-stable}}(m)$ if for all $m$:
\begin{equation}
\frac{1}{m} \sum_{i=1}^m \mathbb{E}_{S \sim D^m}[|f(\textbf{A}(S^{\backslash i}),z_i) - f(\textbf{A}(S),z_i)|] \leq \epsilon_{\textrm{loo-stable}}(m)
\end{equation}
\end{definition}
\begin{definition}
\citep{ShalevShwartz2009a} An algorithm $\textbf{A}$ is \textbf{on-average-LOO Stable} under distribution $\mathcal{D}$ at rate $\epsilon_{\textrm{loo-stable}}(m)$ if for all $m$:
\begin{equation}
|\frac{1}{m} \sum_{i=1}^m \mathbb{E}_{S \sim D^m}[f(\textbf{A}(S^{\backslash i}),z_i) - f(\textbf{A}(S),z_i)]| \leq \epsilon_{\textrm{loo-stable}}(m)
\end{equation}
\end{definition}
Whenever one of these properties holds for all distributions $\mathcal{D}$ we shall say it holds universally (e.g. universal on-average-LOO stable). Each of these property implies all the ones below it at the same rate (e.g. a uniform-LOO stable algorithm at rate $\epsilon_{\textrm{loo-stable}}(m)$ is also all-i-LOO stable, LOO stable and on-average-LOO stable at rate $\epsilon_{\textrm{loo-stable}}(m)$) \citep{ShalevShwartz2009a}. However the implications do not hold in the opposite direction, and there are counter examples for each implication in the opposite directions \citep{ShalevShwartz2009a}. The only exception is that for symmetric algorithms $\textbf{A}$ (meaning the order of the data in the dataset does not matter), then all-i-LOO stable and LOO stable are equivalent  \citep{ShalevShwartz2009a}. Some of these stability notions have also been studied by different authors under different names \citep{Bousquet02,Kutin02,Rakhlin05,Mukherjee06} sometimes with slight variations on the definitions. 

Another even stronger notion of LOO stability simply called uniform stability was studied by \citet{Bousquet02}. It is similar to uniform-LOO stability except that the absolute difference in loss needs to be smaller than $\epsilon_{\textrm{loo-stable}}(m)$ at all $z \in Z$ for any held out $z_i$, instead of just at the held out data point $z_i$. However, it turns out we do not need a notion stronger than Uniform-LOO Stable to guarantee online learnability.

\citet{ShalevShwartz2009a} have shown the following two results for AERM and ERM in the General Learning Setting:
\begin{theorem}
\citep{ShalevShwartz2009a} A problem is learnable if and only if there exists a universal on-average-LOO stable AERM.
\end{theorem}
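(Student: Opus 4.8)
The plan is to prove both directions through a single exchangeability identity that equates the on-average-LOO stability quantity with a generalization gap. Writing the stability quantity as $\mathrm{stab}(m) = \frac{1}{m}\sum_{i=1}^m \mathbb{E}_{S\sim\mathcal{D}^m}[f(\textbf{A}(S^{\backslash i}),z_i) - f(\textbf{A}(S),z_i)]$, I would first observe that $\frac{1}{m}\sum_i \mathbb{E}_S[f(\textbf{A}(S),z_i)] = \mathbb{E}_S[F_S(\textbf{A}(S))]$ by linearity, while for each $i$ the dataset $S^{\backslash i}$ has size $m-1$ and is independent of $z_i$, so that $\mathbb{E}_S[f(\textbf{A}(S^{\backslash i}),z_i)] = \mathbb{E}_{S'\sim\mathcal{D}^{m-1}}[F(\textbf{A}(S'))]$ for every $i$. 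This yields the key identity
\[
\mathrm{stab}(m) = \mathbb{E}_{S'\sim\mathcal{D}^{m-1}}[F(\textbf{A}(S'))] - \mathbb{E}_{S\sim\mathcal{D}^m}[F_S(\textbf{A}(S))],
\]
so universal on-average-LOO stability is exactly the statement that the population risk at size $m-1$ and the empirical risk at size $m$ agree in the limit, uniformly over $\mathcal{D}$. No symmetrization is needed, since the averaging over $i$ already plays that role.

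For the ($\Leftarrow$) direction, suppose $\textbf{A}$ is a universal on-average-LOO stable AERM. I would rearrange the identity and chain it with the AERM property and the Jensen bound $\mathbb{E}_S[\min_h F_S(h)] \leq \min_h \mathbb{E}_S[F_S(h)] = \min_h F(h)$:
\[
\mathbb{E}_{\mathcal{D}^{m-1}}[F(\textbf{A})] = \mathbb{E}_{\mathcal{D}^m}[F_S(\textbf{A})] + \mathrm{stab}(m) \leq \min_h F(h) + \epsilon_{\textrm{erm}}(m) + \epsilon_{\textrm{loo-stable}}(m).
\]
Reindexing $m\mapsto m+1$ shows $\textbf{A}$ is universally consistent at rate $\epsilon_{\textrm{erm}}(m+1)+\epsilon_{\textrm{loo-stable}}(m+1)$, so the problem is learnable.

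For the harder ($\Rightarrow$) direction, learnability supplies a universally consistent rule $\textbf{B}$ at some rate $\epsilon_{\textrm{cons}}$, and the task is to manufacture from it an AERM that is still universally consistent; the stability then follows for free from the identity. For any universally consistent AERM $\textbf{A}$, the lower bound $\mathrm{stab}(m) \geq \min_h F - (\min_h F + \epsilon_{\textrm{erm}}(m)) = -\epsilon_{\textrm{erm}}(m)$ is immediate, while the matching upper bound reduces to showing $\min_h F(h) - \mathbb{E}_{\mathcal{D}^m}[\min_h F_S(h)] \to 0$, i.e. that the empirical optimum does not systematically undershoot the population optimum.

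I expect the construction of a consistent AERM out of an arbitrary consistent rule, together with this control of $\mathbb{E}[\min_h F_S]$, to be the main obstacle: in the general learning setting the naive ERM need not be consistent, so the AERM must be built so that its empirical risk tracks $\min_h F$ from above while its population risk still converges (for example by running $\textbf{B}$ and correcting its output toward empirical optimality at a controlled rate). Once such an $\textbf{A}$ is in hand, the identity converts its vanishing generalization gap directly into a universal on-average-LOO stability rate, which together with the AERM property completes the equivalence.
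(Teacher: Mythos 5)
The paper itself offers no proof of this statement---it is imported verbatim from \citet{ShalevShwartz2009a}---so your attempt can only be judged against the source's argument. Your ($\Leftarrow$) direction is complete and essentially the source's own: the identity $\mathrm{stab}(m)=\mathbb{E}_{S'\sim\mathcal{D}^{m-1}}[F(\textbf{A}(S'))]-\mathbb{E}_{S\sim\mathcal{D}^m}[F_S(\textbf{A}(S))]$ is exactly the deleted-estimate identity used there, and you are right that it needs no symmetry of $\textbf{A}$, since for every $i$ the set $S^{\backslash i}$ is an i.i.d.\ sample of size $m-1$ independent of $z_i$. Chaining it with the AERM property and $\mathbb{E}_S[\min_{h}F_S(h)]\leq\min_h F(h)$ correctly yields universal consistency at rate $\epsilon_{\textrm{erm}}(m+1)+\epsilon_{\textrm{loo-stable}}(m+1)$, and because the definition of on-average-LOO stability puts the absolute value outside the average, the identity delivers precisely the quantity being bounded.

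The genuine gap is the ($\Rightarrow$) direction, which you locate accurately but do not close. Two steps are missing, and neither is routine. First, from an arbitrary universally consistent rule $\textbf{B}$ you must build a rule that is simultaneously a universal AERM and still universally consistent; in the general learning setting the ERM itself can fail to be consistent, and your suggested fix---``correcting the output of $\textbf{B}$ toward empirical optimality at a controlled rate''---is exactly the move that can wreck the population risk on the known learnable-without-uniform-convergence examples, where near-empirical-minimizers overfit badly. Making this work is the heart of the source's proof and requires a dedicated construction, not a generic perturbation. Second, your reduction of the stability upper bound to showing $\min_h F(h)-\mathbb{E}_{\mathcal{D}^m}[\min_h F_S(h)]\to 0$ is a correct reformulation, but this is not a free concentration fact: absent uniform convergence, the empirical optimum could a priori undershoot the population optimum persistently, and ruling that out for every learnable problem and every distribution is essentially of the same difficulty as the theorem itself (in the source it emerges from the companion lemma that, for AERMs, universal consistency is equivalent to on-average generalization, proved together with the construction). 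As written, your proposal establishes one direction and correctly identifies, but does not supply, the mathematical content of the other.
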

\begin{theorem}
\citep{ShalevShwartz2009a} A problem is learnable with an ERM if and only if there exists a universal LOO stable ERM.
\end{theorem}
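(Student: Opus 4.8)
The plan is to prove the two directions separately, with the sufficiency direction resting on an almost-unbiasedness (symmetry) property of the leave-one-out estimate and the necessity direction routing through the uniform convergence property.

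For sufficiency, suppose $\textbf{A}$ is a universal LOO stable ERM at rate $\epsilon_{\textrm{loo-stable}}(m)$, and fix any distribution $\mathcal{D}$. The key observation is that for each index $i$ the held-out point $z_i$ is independent of $S^{\backslash i}$, so
\[
\mathbb{E}_{S \sim \mathcal{D}^m}[f(\textbf{A}(S^{\backslash i}),z_i)] = \mathbb{E}_{S' \sim \mathcal{D}^{m-1}}[F(\textbf{A}(S'))].
\]
Averaging over $i$ and using LOO stability to replace the held-out losses by the in-sample losses costs at most $\epsilon_{\textrm{loo-stable}}(m)$, while the ERM property gives $\frac{1}{m}\sum_i f(\textbf{A}(S),z_i) = F_S(\textbf{A}(S)) = \min_h F_S(h)$. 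Since $\mathbb{E}_S[\min_h F_S(h)] \le \min_h F(h)$ by Jensen, chaining these yields $\mathbb{E}_{S' \sim \mathcal{D}^{m-1}}[F(\textbf{A}(S'))] \le \min_h F(h) + \epsilon_{\textrm{loo-stable}}(m)$, i.e. universal consistency at rate $\epsilon_{\textrm{cons}}(m) = \epsilon_{\textrm{loo-stable}}(m+1)$. This direction is essentially bookkeeping once the symmetry identity is in hand.

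For necessity, I would first record a fact specific to ERM: writing $f(h,z_i) = m F_S(h) - (m-1)F_{S^{\backslash i}}(h)$ and using that $\textbf{A}(S)$ minimizes $F_S$ while $\textbf{A}(S^{\backslash i})$ minimizes $F_{S^{\backslash i}}$, one gets $f(\textbf{A}(S^{\backslash i}),z_i) - f(\textbf{A}(S),z_i) \ge 0$ pointwise. Hence the absolute values are vacuous for an ERM, and the LOO-stability rate equals $\mathbb{E}_{S' \sim \mathcal{D}^{m-1}}[F(\textbf{A}(S'))] - \mathbb{E}_{S}[\min_h F_S(h)]$. It therefore suffices to control the two (non-negative) gaps $\mathbb{E}_{S'}[F(\textbf{A}(S'))] - \min_h F(h)$ and $\min_h F(h) - \mathbb{E}_S[\min_h F_S(h)]$. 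The plan is to invoke the characterization that learnability with an ERM is equivalent to the uniform convergence property; under uniform convergence at rate $\epsilon_{\textrm{unif}}(m)$ the first gap is $O(\epsilon_{\textrm{unif}}(m-1))$ (ERM consistency) and the second is at most $\epsilon_{\textrm{unif}}(m)$ (since $\min_h F_S(h) \ge \min_h F(h) - \sup_h|F_S(h)-F(h)|$). The crucial point is that averaging over $i$ absorbs the factor $m$ in the per-point identity, so no spurious $m\cdot\epsilon_{\textrm{unif}}$ term survives.

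The main obstacle is the necessity direction, and specifically the step asserting that ERM-learnability forces uniform convergence (equivalently, that the empirical minimum value $\mathbb{E}_S[\min_h F_S(h)]$ converges to $\min_h F(h)$ uniformly over $\mathcal{D}$). Consistency of a single ERM does not by itself bound its empirical-to-population gap: a consistent ERM could in principle drive the empirical risk far below $\min_h F(h)$ while still outputting hypotheses with near-optimal population risk, in which case the second gap above, being a property of the problem rather than of the particular ERM, would stay bounded away from zero for \emph{every} ERM. Ruling this out requires pinning down exactly what ``learnable with an ERM'' means (every ERM consistent versus some ERM consistent) and leaning on the known equivalence with uniform convergence; obtaining a single rate $\epsilon_{\textrm{loo-stable}}(m)$ valid for all distributions simultaneously is the delicate part, whereas the symmetry identity and the ERM optimality inequalities are routine.
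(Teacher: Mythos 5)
The paper itself offers no proof of this statement---it is imported verbatim from \citet{ShalevShwartz2009a}---so your attempt has to be measured against the argument in that reference. Your sufficiency direction is correct and is essentially the reference's argument: the exchangeability identity $\mathbb{E}_{S \sim \mathcal{D}^m}[f(\textbf{A}(S^{\backslash i}),z_i)] = \mathbb{E}_{S' \sim \mathcal{D}^{m-1}}[F(\textbf{A}(S'))]$, the ERM identity $F_S(\textbf{A}(S)) = \min_{h} F_S(h)$, and $\mathbb{E}_S[\min_h F_S(h)] \leq \min_h F(h)$ (a min/expectation exchange, not really Jensen, but the inequality is right) chain into universal consistency at rate $\epsilon_{\textrm{loo-stable}}(m+1)$. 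Your pointwise observation that $f(\textbf{A}(S^{\backslash i}),z_i) - f(\textbf{A}(S),z_i) \geq 0$ for an ERM, via $f(h,z_i) = mF_S(h) - (m-1)F_{S^{\backslash i}}(h)$ and the two optimality inequalities, is also correct and is exactly why LOO stability becomes the \emph{exact} identity $\epsilon_{\textrm{loo-stable}}(m) = \mathbb{E}_{S'\sim\mathcal{D}^{m-1}}[F(\textbf{A}(S'))] - \mathbb{E}_{S\sim\mathcal{D}^m}[\min_h F_S(h)]$; and since the second term is algorithm-independent, you have correctly reduced the whole theorem to controlling the single gap $\min_h F(h) - \mathbb{E}_S[\min_h F_S(h)]$.

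The necessity direction, however, has a genuine gap---one you half-flag yourself. Your plan is to invoke ``learnability with an ERM is equivalent to uniform convergence,'' but that equivalence is \emph{false} in the general learning setting of this paper; the divergence of learnability from uniform convergence is the entire point of the cited line of work, as recounted in Section 2. Concretely: let $\mathcal{Z} = [0,1]$, $\mathcal{H}$ the collection of all finite subsets of $[0,1]$, and $f(h,z) = I(z \in h)$. The rule that always outputs $\emptyset$ is an ERM ($F_S(\emptyset) = 0 = \min_h F_S(h)$) and is universally consistent at rate $0$ ($F(\emptyset) = 0 = \min_h F(h)$ for every $\mathcal{D}$), yet uniform convergence fails maximally: taking $h = S$ gives $F_S(h) = 1$ while $F(h) = 0$ under any atomless $\mathcal{D}$. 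So the bridge you lean on collapses, and with it your bound $\min_h F(h) - \mathbb{E}_S[\min_h F_S(h)] \leq \mathbb{E}_S[\sup_h |F_S(h) - F(h)|]$---which the same example shows is also wildly lossy, since there the left side is $0$ while the right side is $1$. What the reference actually proves, and what you correctly identify as ``the main obstacle'' without resolving it, is precisely that universal consistency of an ERM \emph{directly} forces $\mathbb{E}_S[\min_h F_S(h)] \rightarrow \min_h F(h)$ at a rate uniform over $\mathcal{D}$ (i.e., a universally consistent ERM generalizes), an argument that exploits consistency across distributions and nowhere passes through uniform convergence. As written, your necessity direction is a correct reduction to an unproven lemma, justified by an appeal to a characterization that is false in this setting; the proof is therefore incomplete.
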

A nice consequence of this result is that for batch learning in the General Learning Setting, it is sufficient to restrict our attention to AERM that have such stability properties.
We will see that the notion of LOO stability, especially uniform-LOO stability, is very natural to analyze online algorithms as the algorithm must output a sequence of hypotheses as the dataset is grown one data point at a time. In the context of batch learning, RO stability is a more natural notion and leads to stronger results.
\subsection{Replace-One Stability}
Most notions of RO stability are measured in terms of change in the loss at another sample point when looking at the output hypothesis trained with an initial dataset and that dataset with one data point replaced by another. We briefly mention two of the strongest RO stability notions that turn out to be both sufficient and necessary for batch learnability. Another weaker notion of RO stability has been studied in \citet{ShalevShwartz2010}. For the definitions below, we denote $S^{(i)}$ the dataset $S$ with the $i^{th}$ data point replaced by another data point $z'_i$.
\begin{definition}
\citep{ShalevShwartz2010} An algorithm $\textbf{A}$ is \textbf{strongly-uniform-RO Stable} at rate $\epsilon_{\textrm{ro-stable}}(m)$ if for all $m$, dataset $S$ of size $m$ and data points $z'_i$ and $z'$:
\begin{equation}
|f(\textbf{A}(S^{(i)}),z') - f(\textbf{A}(S),z')| \leq \epsilon_{\textrm{ro-stable}}(m)
\end{equation}
\end{definition}
\begin{definition}
\citep{ShalevShwartz2010} An algorithm $\textbf{A}$ is \textbf{uniform-RO Stable} at rate $\epsilon_{\textrm{ro-stable}}(m)$ if for all $m$, dataset $S$ of size $m$ and data points $\{z'_1, z'_2,\dots,z'_m\}$ and $z'$:
\begin{equation}
\frac{1}{m}\sum_{i=1}^m|f(\textbf{A}(S^{(i)}),z') - f(\textbf{A}(S),z')| \leq \epsilon_{\textrm{ro-stable}}(m)
\end{equation}
\end{definition}
The definition of strongly-uniform-RO Stable is similar to the definition of uniform stability of \citet{Bousquet02}, except that we replace a data point instead of deleting one. RO stability allows to show the following much stronger result than with LOO stability:
\begin{theorem}
\citep{ShalevShwartz2010} A problem is learnable if and only if there exists a uniform-RO stable AERM.
\end{theorem}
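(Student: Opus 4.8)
The statement is an equivalence, so the plan is to treat each direction separately. The sufficiency direction (a uniform-RO stable AERM implies learnability) is the constructive half, which I would prove in full; the converse (learnability implies the existence of such a rule) is where I expect the real difficulty. For sufficiency, let $\textbf{A}$ be a uniform-RO stable AERM and decompose its consistency gap as
\begin{equation}
\begin{split}
\mathbb{E}_S[F(\textbf{A}(S)) - \min_h F(h)] &= \mathbb{E}_S[F(\textbf{A}(S)) - F_S(\textbf{A}(S))] \\
&\quad + \mathbb{E}_S[F_S(\textbf{A}(S)) - \min_h F_S(h)] + \mathbb{E}_S[\min_h F_S(h) - \min_h F(h)].
\end{split}
\end{equation}
The second (AERM) term is at most $\epsilon_{\textrm{erm}}(m)$ by assumption, and the third term is non-positive since $\mathbb{E}_S[\min_h F_S(h)] \le \min_h \mathbb{E}_S[F_S(h)] = \min_h F(h)$. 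Everything therefore reduces to bounding the first (generalization) term by the stability rate.

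To control generalization I would use a replace-one symmetrization tailored to the single-evaluation-point form of uniform-RO stability. Draw a fresh $z' \sim \mathcal{D}$ and an independent uniform index $J \in \{1,\dots,m\}$, and let $S^{(J)}$ be $S$ with $z_J$ replaced by $z'$. A relabeling of i.i.d. coordinates gives $\mathbb{E}_{S,z'}[f(\textbf{A}(S),z')] = \mathbb{E}_S[F(\textbf{A}(S))]$ and, crucially, $\mathbb{E}_{S,z',J}[f(\textbf{A}(S^{(J)}),z')] = \mathbb{E}_S[F_S(\textbf{A}(S))]$ (here $z'$ is precisely the $J$-th coordinate of the training set $S^{(J)}$, so evaluating at it reproduces the empirical risk once we average over $J$). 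Hence the generalization term equals $\mathbb{E}_{S,z',J}[f(\textbf{A}(S),z') - f(\textbf{A}(S^{(J)}),z')]$. For each fixed $S$ and $z'$, applying the uniform-RO definition with all replacement points set equal to $z'$ and evaluation point $z'$ bounds the average over $J$ by $\epsilon_{\textrm{ro-stable}}(m)$. Combining the three terms shows $\textbf{A}$ is universally consistent at rate $\epsilon_{\textrm{ro-stable}}(m) + \epsilon_{\textrm{erm}}(m)$, so the problem is learnable.

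For necessity I would start from a universally consistent rule $\textbf{B}$ (which exists by learnability), noting that learnability also already supplies an AERM via the earlier characterization, and try to manufacture a single rule that is both an AERM and uniform-RO stable. The obstacle — and the step I expect to dominate the work — is that these two requirements pull in opposite directions. Uniform-RO stability is a worst-case bound over all datasets and all replacement points, so the natural way to force it is to make the output insensitive to individual points, for example by running $\textbf{B}$ on a sub-sample of size $k = o(m)$; a replaced point then perturbs the output only when it lands in the sub-sample, yielding an average change of order $k/m$ uniformly. However, such a rule need not be an AERM: its empirical risk concentrates near $F^\ast = \min_h F(h)$, while $\min_h F_S(h)$ can sit well below $F^\ast$ because the empirical minimizer overfits, so the AERM gap does not vanish.

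The genuine content of the converse is therefore a more careful construction that simultaneously drives the empirical suboptimality to zero while keeping per-point sensitivity uniformly small. My plan would be to build this rule from $\textbf{B}$ (and the existing AERM) — intuitively trading off fit against sensitivity, as a regularization or sub-sampling-with-correction scheme does — and then to verify the AERM and uniform-RO properties directly and separately. I would treat reconciling these two competing demands, in the worst-case sense demanded by uniform-RO stability, as the crux of the proof.
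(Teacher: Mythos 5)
First, a framing note: the paper does not prove this statement at all --- it is quoted from \citet{ShalevShwartz2010} --- so your attempt can only be measured against the argument in that reference. Your sufficiency half is correct and is essentially that argument: the three-term decomposition, the observation $\mathbb{E}_S[\min_h F_S(h)] \leq \min_h F(h)$, and the replace-one symmetrization identity $\mathbb{E}_{S,z',J}[f(\textbf{A}(S^{(J)}),z')] = \mathbb{E}_S[F_S(\textbf{A}(S))]$, followed by applying uniform-RO stability with all replacement points set equal to the evaluation point $z'$, which bounds the average over $J$ by $\epsilon_{\textrm{ro-stable}}(m)$. The only caveat is that ``AERM'' must be read as \emph{universal} AERM, since you invoke the $\epsilon_{\textrm{erm}}(m)$ bound under an arbitrary distribution $\mathcal{D}$.

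The genuine gap is the necessity direction, which you leave as a plan, and the specific obstacle on which you stall is removable by a lemma you are missing. You reject the subsampling construction on the grounds that $\min_h F_S(h)$ ``can sit well below $F^\ast$'' so the AERM gap need not vanish; but for \emph{learnable} problems it does vanish: $\mathbb{E}_S[\min_h F_S(h)] \rightarrow \min_h F(h)$. The reason is that learnability supplies a universally consistent AERM $\textbf{A}$ (the on-average-LOO characterization, Theorem 1 of this paper), and \citet{ShalevShwartz2010} show that a consistent AERM also generalizes on average, i.e. $\mathbb{E}_S[F_S(\textbf{A}(S)) - F(\textbf{A}(S))] \rightarrow 0$; hence $\mathbb{E}_S[\min_h F_S(h)] \geq \mathbb{E}_S[F_S(\textbf{A}(S))] - \epsilon_{\textrm{erm}}(m) \rightarrow F^\ast$, and the reverse inequality is trivial. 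With this lemma, the construction you proposed and then abandoned goes through: set $\textbf{A}'(S) = \textbf{A}(S_{1:m'})$ with $m' \rightarrow \infty$ and $m'/m \rightarrow 0$. Replacing $z_i$ changes the output only when $i \leq m'$, and each such term is bounded by $B$ via the bounded-instantaneous-regret assumption, so $\textbf{A}'$ is uniform-RO stable at rate $B m'/m$; and $\textbf{A}'$ is a universal AERM because, for any fixed $h^\ast$, $\mathbb{E}_S[F_S(\textbf{A}'(S)) - F_S(h^\ast)] \leq B\,m'/m + \epsilon_{\textrm{cons}}(m')$ (split the empirical risk into the $m'$ subsampled points, where the difference is bounded by $B$, and the remaining fresh points, whose conditional expectation is $F(\textbf{A}'(S)) - F(h^\ast)$), while $\mathbb{E}_S[F_S(h^\ast) - \min_h F_S(h)] = F(h^\ast) - \mathbb{E}_S[\min_h F_S(h)] \rightarrow 0$ by the lemma. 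So the tension you correctly identified between fit and worst-case sensitivity is real, but it is resolved by the generalization-of-consistent-AERMs lemma rather than by a new trade-off scheme; without that ingredient your converse remains unproven.
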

In addition if we allow for randomized algorithms, in that the algorithm outputs a distribution $d$ over $\mathcal{H}$ such that the loss $f(d,z) = \mathbb{E}_{h \sim d}[f(h,z)]$, than an even stronger result can be shown:
\begin{theorem}
\citep{ShalevShwartz2010} A problem is learnable if and only if there exists a strongly-uniform-RO stable always AERM (potentially randomized).
\end{theorem}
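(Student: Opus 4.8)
The plan is to prove the two directions separately: the forward (\emph{if}) direction follows from a standard stability-to-generalization decomposition, while the reverse (\emph{only if}) direction is obtained by a subsampling construction that plays the always-AERM property off against replace-one stability, and this is where randomization becomes essential.

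For the \emph{if} direction, suppose $\mathbf{A}$ is a strongly-uniform-RO stable always AERM at rates $\epsilon_{\textrm{ro-stable}}(m)$ and $\epsilon_{\textrm{erm}}(m)$. I would bound the expected consistency gap through the decomposition
\[
F(\mathbf{A}(S)) - \min_{h} F(h) = [F(\mathbf{A}(S)) - F_S(\mathbf{A}(S))] + [F_S(\mathbf{A}(S)) - \min_{h} F_S(h)] + [\min_{h} F_S(h) - \min_{h} F(h)].
\]
The middle (optimization) term is at most $\epsilon_{\textrm{erm}}(m)$ by the always-AERM assumption, and the last term is non-positive in expectation by Jensen's inequality, since $\mathbb{E}_S[\min_h F_S(h)] \le \min_h \mathbb{E}_S[F_S(h)] = \min_h F(h)$. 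The crux is the first (generalization) term, which I would handle by the classical ghost-sample symmetrization: introducing an independent copy $z'_i$ of each $z_i$ and relabeling shows $\mathbb{E}[F(\mathbf{A}(S))] = \frac{1}{m}\sum_i \mathbb{E}[f(\mathbf{A}(S^{(i)}), z_i)]$, so the expected generalization gap equals $\frac{1}{m}\sum_i \mathbb{E}[f(\mathbf{A}(S^{(i)}), z_i) - f(\mathbf{A}(S), z_i)]$, which strongly-uniform-RO stability bounds by $\epsilon_{\textrm{ro-stable}}(m)$ (taking $z' = z_i$). Summing yields universal consistency at rate $\epsilon_{\textrm{ro-stable}}(m) + \epsilon_{\textrm{erm}}(m)$, so the problem is learnable.

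For the \emph{only if} direction, assume the problem is learnable, so there is a universally consistent $\mathbf{A}$ at rate $\epsilon_{\textrm{cons}}(m)$. I would construct a randomized always AERM $\mathbf{A}'$ by \emph{subsampling}: on a dataset $S$ of size $m$, draw $k = k(m)$ points i.i.d.\ from the empirical distribution $\hat{\mathcal{D}}_S$ (uniform over $S$) to form a subsample $T$, and output $\mathbf{A}(T)$. Two facts drive the argument. First, since $T$ consists of $k$ i.i.d.\ draws from $\hat{\mathcal{D}}_S$ and $F_S$ is exactly the population risk under $\hat{\mathcal{D}}_S$, universal consistency gives $\mathbb{E}_T[F_S(\mathbf{A}(T))] - \min_h F_S(h) \le \epsilon_{\textrm{cons}}(k)$ for \emph{every} fixed $S$, i.e.\ $\mathbf{A}'$ is an always AERM at rate $\epsilon_{\textrm{cons}}(k(m))$. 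Second, replacing a single $z_i$ by $z'_i$ alters $\hat{\mathcal{D}}_S$ only on one atom of mass $1/m$, so the laws of $T$ under $S$ and under $S^{(i)}$ differ in total variation by at most $k/m$ (each of the $k$ draws hits the altered atom with probability $1/m$); coupling the two subsamples then yields $|f(\mathbf{A}'(S^{(i)}), z') - f(\mathbf{A}'(S), z')| \le B\,k/m$ for all $z'$, i.e.\ strongly-uniform-RO stability at rate $Bk/m$.

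The final step is to balance the two competing rates: $\epsilon_{\textrm{cons}}(k)$ improves as $k$ grows while $Bk/m$ degrades as $k$ grows, so choosing any $k(m) \to \infty$ with $k(m) = o(m)$ (for instance $k(m) = \lceil \sqrt{m} \rceil$) drives both to zero and produces a strongly-uniform-RO stable randomized always AERM. I expect this balancing construction to be the main obstacle, and in particular the observation that subsampling from the empirical distribution simultaneously preserves the AERM property (via consistency on $\hat{\mathcal{D}}_S$) and buys stability (via the $1/m$ atom perturbation) is what genuinely exploits randomization: a deterministic procedure cannot in general be made strongly-uniform-RO stable at a nontrivial rate while remaining an always AERM.
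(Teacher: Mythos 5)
This theorem appears in the paper only as a citation to \citet{ShalevShwartz2010} and is never proved there, so the comparison is to the cited source: your proof is correct and reconstructs essentially that argument --- the three-term risk decomposition with ghost-sample symmetrization (strongly-uniform-RO stability controlling the generalization term via $\mathbb{E}[F(\mathbf{A}(S))] = \frac{1}{m}\sum_i \mathbb{E}[f(\mathbf{A}(S^{(i)}),z_i)]$, the always-AERM property controlling the optimization term) for sufficiency, and for necessity the randomized rule that runs a universally consistent algorithm on $k(m)$ i.i.d.\ draws from the empirical distribution, trading the $\epsilon_{\textrm{cons}}(k)$ always-AERM rate against the $Bk/m$ coupling bound and balancing with $k=\lceil\sqrt{m}\rceil$. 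Both directions check out, including the two points where care is genuinely needed and which you handle correctly: universal consistency gives a rate uniform over all distributions, so it applies to every empirical distribution $\hat{\mathcal{D}}_S$, and sampling \emph{with} replacement makes the subsample genuinely i.i.d.\ from $\hat{\mathcal{D}}_S$, while the bounded-instantaneous-regret constant $B$ is exactly the quantity the coupling step requires.
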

Note that if the problem is learnable and the loss $f$ is convex in $h$ for all $z$ and $\mathcal{H}$ is a convex set then there must exist a deterministic algorithm that is strongly-uniform-RO stable always AERM, (namely the algorithm that returns $\mathbb{E}_{h \sim d}[h]$ for the distribution $d$ picked by the randomized algorithm).
\section{Sufficient Stability Conditions in the Online Setting} 
We now move our attention to the problem of online learning, where the data points $z_1, z_2, \dots, z_m$ are revealed to the algorithm in sequence and potentially chosen adversarially given knowledge of the algorithm $\textbf{A}$. We consider using a batch algorithm in this online setting in the following way: let $S_i = \{z_1, z_2, \dots, z_i \}$ denote the dataset of the first $i$ data points; at each time $i$, after observing $S_{i-1}$, the batch algorithm $\textbf{A}$ is used to pick the hypothesis $h_{i} = \textbf{A}(S_{i-1})$. As mentioned previously, online algorithms like Follow-the-(Regularized)-Leader can be thought of in this way. This can also be thought as a batch-to-online reduction, similar to the approach of \citet{Kakade06}, where we reduce online learning to solving a sequence of batch learning problems. Unlike \citep{Kakade06} we consider the general learning setting instead of the supervised classification setting and do not make the transductive assumption that we have access to future ``unlabeled'' data points. Hence our results can be interpreted as a set of general conditions under which batch algorithms can be used to obtain a no regret algorithm for online learning.

We now begin by introducing some definitions particular to the online setting:
\begin{definition}
An algorithm $\mathbf{A}$ has \textbf{no regret} at rate $\epsilon_{\textrm{regret}}(m)$ if for all $m$ and any sequence $z_1, z_2, \dots, z_m$, potentially chosen adversarially given knowledge of $\mathbf{A}$, it holds that:
\begin{equation}
\frac{1}{m}\sum_{i=1}^m f(\mathbf{A}(S_{i-1}),z_i) - \min_{h \in \mathcal{H}} \frac{1}{m}\sum_{i=1}^m f(h,z_i) \leq \epsilon_{\textrm{regret}}(m)
\end{equation}
\end{definition}
If such algorithm $\textbf{A}$ exists, we say the problem is \textbf{online learnable}. It is well known that the FTL algorithm has no regret at rate $O(\frac{\log m}{m})$ for Lipschitz continuous and strongly convex loss $f$ in $h$ at all $z$ \citep{Hazan06,Kakade08}. Additionnally, if $f$ is Lipschitz continuous and convex in $h$ at all $z$, then the FTRL algorithm has no regret at rate $O(\frac{1}{\sqrt{m}})$ \citep{Kakade08}.

An important subclass of always AERM algorithms is what we define as a Regularized ERM (RERM):
\begin{definition}
An algorithm $\mathbf{A}$ is a \textbf{Regularized ERM} if for all $m$ and any dataset $S$ of $m$ data points:
\begin{equation}
r_0(\textbf{A}(S)) + \sum_{i=1}^m [f(\textbf{A}(S),z_i) + r_i(\textbf{A}(S))] = \min_{h \in \mathcal{H}} r_0(h) + \sum_{i=1}^m [f(h,z_i) + r_i(h)]
\end{equation}
where $\{r_i\}_{i=0}^m$ is a sequence of regularizer functionals ($r_i : \mathcal{H} \rightarrow \mathbb{R}$), which measure the complexity of a hypothesis $h$, and that satisfy $\sup_{h,h' \in \mathcal{H}} |r_i(h)-r_i(h')| \leq \rho_i$ for all $i$ where $\{\rho_i\}_{i=0}^\infty$ is a sequence that is $o(1)$.
\end{definition}
It is easy to see that any RERM algorithm is always AERM at rate $\frac{1}{m} \sum_{i=0}^m \rho_i$. Additionally, an ERM is a special case of a RERM where $r_i = 0$ for all $i$. This subclass is important for online learning as FTRL can be thought of as using an underlying RERM to pick the sequence of hypotheses. Typically FTRL chooses $r_i = \lambda_i r$ for some regularizer $r$ and $\lambda_i$ a regularization constant such that $\{\lambda_i\}_{i=0}^\infty$ is $o(1)$. Many Mirror Descent type algorithms such as gradient descent can also be interpreted as some form of RERM (see section \ref{secGradient} and \citep{McMahan11}) but where $r_i$ may depend on previously seen datapoints. Additionally Weighted Majority/Hedge type algorithms can also be interpreted as Randomized RERM (see section \ref{secWM}). Our strongest result for online learnability will be particular to the class of RERM.

A notion of stability related to uniform-LOO stability (but slightly weaker) that will be sufficient for our online setting is what we define as online stability:
\begin{definition}
An algorithm $\textbf{A}$ is \textbf{Online Stable} at rate $\epsilon_{\textrm{on-stable}}(m)$ if for all $m$, dataset $S$ of size $m$:
\begin{equation}
|f(\textbf{A}(S^{\backslash m}),z_m) - f(\textbf{A}(S),z_m)| \leq \epsilon_{\textrm{on-stable}}(m)
\end{equation}
\end{definition}

The difference between online stability and uniform-LOO stability is that it is only required to have small change in loss on the last data point when it is held out, rather than any data point in the dataset $S$. For symmetric algorithms (e.g. FTL/FTRL algorithms), online stability is equivalent to uniform-LOO stability, however it is weaker than uniform-LOO stability for asymmetric algorithms, like gradient-based methods analyzed in Section \ref{secGradient}. It is also obvious that an uniform-LOO stable algorithm must also be online stable at rate $\epsilon_{\textrm{on-stable}}(m) \leq \epsilon_{\textrm{loo-stable}}(m)$.

We now present our main results for the class of RERM and always AERM:
\begin{theorem} \label{thmRAERM}
If there exists an online stable RERM, then the problem is online learnable. In particular, it has no regret at rate:
\begin{equation}
\epsilon_{\textrm{regret}}(m) \leq \frac{1}{m} \sum_{i=1}^m \epsilon_{\textrm{on-stable}}(i) + \frac{2}{m} \sum_{i=0}^{m-1} \rho_i + \frac{\rho_m}{m}
\end{equation}
\end{theorem}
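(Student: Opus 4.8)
The plan is to run the batch RERM as an online learner, so that the hypothesis played at round $i$ is $h_i = \mathbf{A}(S_{i-1})$, and to compare this against the sequence of \emph{leaders} $g_i := \mathbf{A}(S_i)$, i.e. the RERM refit after each new point arrives. Since $h_i = g_{i-1}$, I would write the cumulative regret $\sum_{i=1}^m f(g_{i-1},z_i) - \min_{h}\sum_{i=1}^m f(h,z_i)$ and split it as
\begin{equation}
\sum_{i=1}^m\!\big[f(g_{i-1},z_i)-f(g_i,z_i)\big] \;+\; \Big(\sum_{i=1}^m f(g_i,z_i)-\min_{h}\sum_{i=1}^m f(h,z_i)\Big),
\end{equation}
a \emph{stability} term plus a \emph{be-the-leader} term, and bound the two in turn.

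For the stability term I would apply online stability at each size $i$. The key observation is that the size-$i$ dataset $S_i$ has last point $z_i$ and satisfies $S_i^{\backslash i}=S_{i-1}$, so $\mathbf{A}(S_i^{\backslash i})=g_{i-1}$; online stability then gives $|f(g_{i-1},z_i)-f(g_i,z_i)|\le\epsilon_{\textrm{on-stable}}(i)$. Summing the relevant one-sided inequality yields $\sum_{i=1}^m[f(g_{i-1},z_i)-f(g_i,z_i)]\le\sum_{i=1}^m\epsilon_{\textrm{on-stable}}(i)$, which is exactly the first term of the claimed rate. Only this one direction is used and symmetry of $\mathbf{A}$ is never invoked, which is precisely why online stability (rather than full uniform-LOO stability) is enough for asymmetric learners.

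The substance is the be-the-leader term, where I would exploit that each $g_i$ is the \emph{exact} minimizer of the regularized objective $P_i(h)=r_0(h)+\sum_{j=1}^i[f(h,z_j)+r_j(h)]$, noting $P_i = P_{i-1}+f(\cdot,z_i)+r_i$. A short telescoping argument does it: from $P_{i-1}(g_i)\ge P_{i-1}(g_{i-1})=\min_h P_{i-1}(h)$ one gets $P_i(g_i)-P_{i-1}(g_{i-1})\ge f(g_i,z_i)+r_i(g_i)$, and summing over $i$ gives $\sum_{i=1}^m f(g_i,z_i)\le P_m(g_m)-P_0(g_0)-\sum_{i=1}^m r_i(g_i)$. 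Comparing the minimum $P_m(g_m)\le P_m(h^{*})$ for $h^{*}=\argmin_h\sum_{i=1}^m f(h,z_i)$ and using $\sup_{h,h'}|r_i(h)-r_i(h')|\le\rho_i$ to convert every regularized comparison into the unregularized losses bounds this term by $\sum_{i=0}^m\rho_i$. Since the $\rho_i$ are nonnegative, $\sum_{i=0}^m\rho_i\le 2\sum_{i=0}^{m-1}\rho_i+\rho_m$, so this already implies the stated constant; equivalently, routing the comparison through the final leader $g_m$ and then invoking that a RERM is an always-AERM at rate $\tfrac1m\sum_{i=0}^m\rho_i$ to relate $g_m$ to $h^{*}$ reproduces the factor-$2$ form $\tfrac{2}{m}\sum_{i=0}^{m-1}\rho_i+\tfrac{\rho_m}{m}$ directly. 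Dividing the combined bound by $m$ gives the theorem.

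I expect the main obstacle to be the be-the-leader step: it is essential that RERM minimizes $P_i$ \emph{exactly} so the per-round increments telescope without slack, and the regularizer bookkeeping must be done with care — tracking which comparisons contribute a single $\rho_i$ and which contribute the doubled weight — so as to land on the stated rate rather than a looser one. The stability term, by contrast, is essentially immediate once one recognizes that deleting the \emph{last} sample of $S_i$ simply returns the previously played hypothesis $g_{i-1}$.
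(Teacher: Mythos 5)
Your proof is correct, and while the stability half coincides with the paper's (both use $S_i^{\backslash i}=S_{i-1}$ and the one-sided online-stability bound, summed over rounds, which is exactly why online stability suffices for asymmetric learners), your treatment of the be-the-leader term takes a genuinely different and in fact tighter route. The paper never telescopes the regularized objective directly: in Lemma \ref{lemAll} it compares each played hypothesis to the \emph{final} hypothesis $\mathbf{A}(S_m)$, producing an always-AERM term $\sum_{i=1}^m f(\mathbf{A}(S_m),z_i)-\min_{h\in\mathcal{H}}\sum_{i=1}^m f(h,z_i)\le\sum_{i=0}^m\rho_i$ plus a double summation $\sum_{i=1}^{m-1}\sum_{j=1}^i[f(\mathbf{A}(S_i),z_j)-f(\mathbf{A}(S_{i+1}),z_j)]$ measuring the leader's drift, which Lemma \ref{lemRAERM} bounds by $\sum_{i=0}^{m-1}\rho_i$ using the RERM optimality of each $\mathbf{A}(S_i)$; adding the pieces yields the stated $\frac{2}{m}\sum_{i=0}^{m-1}\rho_i+\frac{\rho_m}{m}$. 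Your Kalai--Vempala-style telescoping $P_i(g_i)-P_{i-1}(g_{i-1})\ge f(g_i,z_i)+r_i(g_i)$ handles the be-the-leader term in one step and gives $\sum_{i=0}^m\rho_i$, strictly improving the paper's constant since the $\rho_i$ are nonnegative (your remark that $\sum_{i=0}^m\rho_i\le 2\sum_{i=0}^{m-1}\rho_i+\rho_m$ is right, so the theorem as stated follows a fortiori); note also that in your telescoping the comparison $r_0(h^*)-r_0(g_0)\le\rho_0$ needs only the uniform bound on $r_0$, not that $g_0$ minimizes it. What the paper's bulkier decomposition buys is modularity: Corollary \ref{corLOOAERM} and the double-sum term are reused verbatim for the general always-AERM case in Theorem \ref{thmAERM}, where exact minimization of a regularized objective is unavailable and your telescoping does not apply; your argument is specific to exact RERM (though it would carry over to RSLM by replacing $f$ with the surrogates $\ell_i$), but is cleaner and tighter in that regime.
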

This theorem implies that both FTL and FTRL algorithms are guaranteed to achieve no regret on any problem where they are online stable (or uniform-LOO stable as these algorithms are symmetric). In fact it is easy to show that in the case where $f$ is strongly convex in $h$, FTL is uniform-LOO stable at rate $O(\frac{1}{m})$ (see Lemma \ref{lemStrConvLoss}). Additionally when $f$ is convex in $h$, it is easy to show FTRL is uniform-LOO stable at rate $O(\frac{1}{\sqrt{m}})$ when choosing a strongly convex regularizer $r$ such that $r_m = \lambda_m r$ and $\lambda_m$ to be $\Theta(1/\sqrt{m})$ (see Lemma \ref{lemStrConvReg} and \ref{lemStrConvReg2}), while FTL is not uniform-LOO stable. It is well known that FTL is not a no regret algorithm for general convex problem. Hence using only uniform-LOO stability we can prove currently known results about FTL and FTRL.

An interesting application of this result is in the context of apprenticeship/imitation learning, where it has been shown that such non-i.i.d. supervised learning problems can be reduced to online learning over mini-batch of data \citep{DAGGER}. In this reduction, a classification algorithm is used to pick the next ``leader'' (best classifier in hindsight) at each iteration of training, that is in turn used to collect more data (to add to the training dataset for the next iteration) from the expert we want to mimic. This result implies that online stability (or uniform-LOO stability) of the base classification algorithm in this reduction is sufficient to guarantee no regret, and hence that the reduction provides a good bound on performance.

Unfortunately our current result for the class of always AERM is weaker:
\begin{theorem} \label{thmAERM}
If there exists an always AERM such that either (1) or (2) holds:
\begin{enumerate}
 \item It is always AERM at rate $o(\frac{1}{m})$ and online stable.
 \item It is symmetric, uniform LOO stable at rate $o(\frac{1}{m})$ and uniform RO stable at rate $o(\frac{1}{m})$.
\end{enumerate}
then the problem is online learnable. In particular, for each case it has no regret at rate:
\begin{enumerate}
 \item $\epsilon_{\textrm{regret}}(m) \leq \frac{1}{m} \sum_{i=1}^m \epsilon_{\textrm{on-stable}}(i) + \frac{1}{m} \sum_{i=1}^m i \epsilon_{\textrm{erm}}(i)$
 \item $\epsilon_{\textrm{regret}}(m) \leq \frac{1}{m} \sum_{i=1}^m \epsilon_{\textrm{loo-stable}}(i) + \epsilon_{\textrm{erm}}(m) + \frac{1}{m} \sum_{i=1}^{m-1} i [\epsilon_{\textrm{loo-stable}}(i) + \epsilon_{\textrm{ro-stable}}(i)]$
\end{enumerate}
\end{theorem}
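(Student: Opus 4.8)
The plan is to treat both cases with a common first reduction and then bound a ``be-the-approximate-leader'' quantity in a case-specific way. Throughout, write $g_i = \mathbf{A}(S_i)$ for the hypothesis the batch algorithm would output on the first $i$ points, $L_i(h) = \sum_{j=1}^i f(h,z_j)$ for the cumulative loss, and $V_i = \min_{h \in \mathcal{H}} L_i(h)$ (with $L_0 \equiv 0$, $V_0 = 0$), so that the comparator $\min_h \sum_{i=1}^m f(h,z_i)$ is exactly $V_m$. The played hypothesis at round $i$ is $h_i = \mathbf{A}(S_{i-1})$, and since $S_{i-1} = S_i^{\backslash i}$, online stability at size $i$ gives $f(h_i,z_i) \le f(g_i,z_i) + \epsilon_{\textrm{on-stable}}(i)$. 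Summing over $i$ yields $R_m \le \Psi_m + \sum_{i=1}^m \epsilon_{\textrm{on-stable}}(i)$, where $\Psi_m := \sum_{i=1}^m f(g_i,z_i) - V_m$ (in case (2), symmetry makes online and uniform-LOO stability coincide, so $\epsilon_{\textrm{on-stable}}$ is read as $\epsilon_{\textrm{loo-stable}}$). It then remains to bound the approximate-leader term $\Psi_m$.

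For case (1), I would telescope $f(g_i,z_i) = L_i(g_i) - L_{i-1}(g_i)$. The always-AERM property at size $i$ gives $L_i(g_i) \le V_i + i\,\epsilon_{\textrm{erm}}(i)$, while $L_{i-1}(g_i) \ge V_{i-1}$ because $g_i$ is merely one admissible hypothesis on $S_{i-1}$. Hence $f(g_i,z_i) \le (V_i - V_{i-1}) + i\,\epsilon_{\textrm{erm}}(i)$, and summing telescopes the $V$-differences to $V_m$, giving $\Psi_m \le \sum_{i=1}^m i\,\epsilon_{\textrm{erm}}(i)$. Dividing by $m$ reproduces the stated rate, which is $o(1)$ precisely because the always-AERM rate is assumed $o(1/m)$.

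For case (2), invoking the always-AERM property at every level is too lossy, so I would use it only once, at the top level $m$, and transport with stability. Decompose $\Psi_m = \sum_{i=1}^{m-1}[f(g_i,z_i) - f(g_m,z_i)] + [L_m(g_m) - V_m]$; the second bracket is $\le m\,\epsilon_{\textrm{erm}}(m)$ by always-AERM at size $m$. For the first bracket, telescope the target over the inserted points, $f(g_i,z_i) - f(g_m,z_i) = \sum_{k=i}^{m-1}[f(\mathbf{A}(S_k),z_i) - f(\mathbf{A}(S_{k+1}),z_i)]$, and bound each increment (the effect at the fixed point $z_i$ of inserting $z_{k+1}$) by routing through the intermediate dataset $S_{k+1}^{\backslash i}$: the step $S_{k+1} \to S_{k+1}^{\backslash i}$ deletes $z_i$ and is controlled at $z_i$ by uniform-LOO stability at size $k+1$, while the step $S_{k+1}^{\backslash i} \to S_k$ replaces $z_{k+1}$ by $z_i$ at equal size $k$ and is controlled at $z_i$ by uniform-RO stability at size $k$. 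Reorganizing the resulting double sum by collecting terms at each level $k$ (there are exactly $k$ indices $i \le k$) produces the weight $i$ and, up to reindexing and the monotonicity of the rates, the claimed transport term $\frac{1}{m}\sum_{i=1}^{m-1} i[\epsilon_{\textrm{loo-stable}}(i) + \epsilon_{\textrm{ro-stable}}(i)]$, which vanishes exactly because both stability rates are $o(1/m)$.

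The main obstacle is the increment bound in case (2). Because uniform-LOO stability only constrains the loss at the deleted point, the size-changing step is forced to delete the very point $z_i$ at which we measure, which in turn forces the companion equal-size step to be a replacement whose test point coincides with the replaced point. Matching this against the uniform-RO definition, which averages over the replaced index for a fixed test point, is the delicate part, and is where the symmetry of $\mathbf{A}$ must be exploited; careful bookkeeping of the sizes and indices in the double sum, so that the bound lands on rates $\epsilon(i)$ rather than $\epsilon(m)$, is the remaining technical care.
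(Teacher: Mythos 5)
Your proposal is correct and takes essentially the same route as the paper: your reduction $R_m \leq \Psi_m + \sum_{i=1}^m \epsilon_{\textrm{on-stable}}(i)$ is Lemma \ref{lemAll} combined with Corollary \ref{corLOOAERM}, your case-(1) telescope through the optimal values $V_i$ reproduces the paper's per-level always-AERM bound $\sum_{j=1}^i f(\mathbf{A}(S_i),z_j) - \min_{h \in \mathcal{H}} \sum_{j=1}^i f(h,z_j) \leq i\,\epsilon_{\textrm{erm}}(i)$, and your case-(2) routing of each increment through $S_{k+1}^{\backslash i}$ (uniform-LOO step at size $k+1$, replacement of $z_{k+1}$ by $z_i$ at size $k$ via symmetry) is exactly the paper's lemma for symmetric uniform-LOO/RO stable always AERMs, up to reversing the order of the double summation and invoking monotonicity of the rates. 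The one delicacy you flag --- that uniform-RO stability averages over the replaced index at a \emph{fixed} test point $z'$, whereas the needed terms test at the replaced point itself --- is handled no more carefully in the paper, which simply asserts $\sum_{j=1}^i [ f(\mathbf{A}(S_{i}),z_j) - f(\mathbf{A}(S^{\backslash j}_{i+1}),z_j) ] \leq i\,\epsilon_{\textrm{ro-stable}}(i)$ by definition, so on this point your proposal matches (and is no weaker than) the paper's own argument.
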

We believe the required rates of $o(\frac{1}{m})$ might simply be an artefact of our particular proof technique and that in general it might be true that any always AERM achieves no regret as long as it is online stable. We weren't able to find a counter-example where this is not the case.
\subsection{Detailed Analysis} \label{sec:DetailAnalysis}
We will use the notation $R_m(\textbf{A})$ to denote the regret (as in Equation \ref{eqRegret}) of the sequence of hypotheses predicted by algorithm $\textbf{A}$. We begin by showing the following lemma that will allow us to relate the regret of any algorithm to its online stability and AERM properties.
\begin{lemma} \label{lemAll}
For any algorithm $\textbf{A}$:
\begin{equation}
\begin{array}{rcl}
R_m(\textbf{A}) & = & \sum_{i=1}^m [f(\textbf{A}(S_{i-1}),z_i) - f(\textbf{A}(S_{i}),z_i)] + \sum_{i=1}^m f(\textbf{A}(S_{m}),z_i) - \min_{h \in \mathcal{H}} \sum_{i=1}^m f(h,z_i)\\
& & + \sum_{i=1}^{m-1} \sum_{j=1}^i [ f(\textbf{A}(S_{i}),z_j) - f(\textbf{A}(S_{i+1}),z_j) ]\\
\end{array}
\end{equation}
\end{lemma}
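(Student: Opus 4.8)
The plan is to treat this as a purely algebraic identity among finite sums; there is no probabilistic or analytic content, so the whole argument is bookkeeping on the summation indices. Starting from the definition $R_m(\textbf{A}) = \sum_{i=1}^m f(\textbf{A}(S_{i-1}),z_i) - \min_{h \in \mathcal{H}} \sum_{i=1}^m f(h,z_i)$, I first observe that the claimed right-hand side already contains the term $\sum_{i=1}^m f(\textbf{A}(S_{i-1}),z_i)$ and the same $\min$ term. Cancelling these common pieces reduces the lemma to establishing the identity
\[
\sum_{i=1}^m f(\textbf{A}(S_i),z_i) = \sum_{i=1}^m f(\textbf{A}(S_m),z_i) + \sum_{i=1}^{m-1} \sum_{j=1}^i [f(\textbf{A}(S_i),z_j) - f(\textbf{A}(S_{i+1}),z_j)].
\]

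The key step is to rewrite the double sum by exchanging the order of summation. For a fixed index $j$, the pair $(i,j)$ occurs exactly when $j \le i \le m-1$, so
\[
\sum_{i=1}^{m-1} \sum_{j=1}^i [f(\textbf{A}(S_i),z_j) - f(\textbf{A}(S_{i+1}),z_j)] = \sum_{j=1}^{m-1} \sum_{i=j}^{m-1} [f(\textbf{A}(S_i),z_j) - f(\textbf{A}(S_{i+1}),z_j)].
\]
For each fixed $j$ the inner sum over $i$ telescopes to $f(\textbf{A}(S_j),z_j) - f(\textbf{A}(S_m),z_j)$, leaving $\sum_{j=1}^{m-1} [f(\textbf{A}(S_j),z_j) - f(\textbf{A}(S_m),z_j)]$. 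Adding this to $\sum_{i=1}^m f(\textbf{A}(S_m),z_i)$, the terms $f(\textbf{A}(S_m),z_j)$ for $j \le m-1$ cancel, so the right-hand side collapses to $f(\textbf{A}(S_m),z_m) + \sum_{j=1}^{m-1} f(\textbf{A}(S_j),z_j) = \sum_{j=1}^m f(\textbf{A}(S_j),z_j)$, which is precisely the left-hand side of the reduced identity.

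Since every manipulation above is an exact equality, the lemma follows immediately. The only place requiring care is tracking the summation ranges when swapping the order of the double sum and when folding the $i=m$ term back in; this is the ``main obstacle,'' but it is routine index juggling rather than a genuine difficulty. It is worth flagging the intuition that motivates the decomposition, since it guides the later proofs: the first sum captures an instantaneous online-stability quantity (the leader's loss on $z_i$ before versus after $z_i$ is added), the middle two terms measure how well the final leader $\textbf{A}(S_m)$ competes against the best fixed hypothesis (an AERM-type term), and the surviving double sum accounts for the drift in the leader's loss on past points as further data arrive.
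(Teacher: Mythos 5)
Your proof is correct and is essentially the paper's argument: both reduce the lemma to the identity $\sum_{i=1}^m f(\textbf{A}(S_i),z_i) = \sum_{i=1}^m f(\textbf{A}(S_m),z_i) + \sum_{i=1}^{m-1}\sum_{j=1}^i [f(\textbf{A}(S_i),z_j) - f(\textbf{A}(S_{i+1}),z_j)]$ and establish it by telescoping. The only difference is presentational: the paper verifies the identity by recursively peeling $S_m$ down to $S_i$ (with an indicated iteration), while you make the same telescope explicit by exchanging the order of summation and collapsing the inner sum for each fixed $j$.
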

\begin{proof}
\begin{displaymath}
\begin{array}{rl}
\multicolumn{2}{l}{R_m(\textbf{A})}\\
= & \sum_{i=1}^m f(\textbf{A}(S_{i-1}),z_i) - \min_{h \in \mathcal{H}} \sum_{i=1}^m f(h,z_i)\\
= & \sum_{i=1}^m [f(\textbf{A}(S_{i-1}),z_i) - f(\textbf{A}(S_{m}),z_i)] + \sum_{i=1}^m f(\textbf{A}(S_{m}),z_i) - \min_{h \in \mathcal{H}} \sum_{i=1}^m f(h,z_i)\\
\end{array}
\end{displaymath}
For the term $- \sum_{i=1}^m f(\textbf{A}(S_{m}),z_i)$, we can rewrite it using the following manipulation:
\begin{displaymath}
\begin{array}{rl}
\multicolumn{2}{l}{\sum_{i=1}^m f(\textbf{A}(S_{m}),z_i)}\\
= & \sum_{i=1}^{m-1} f(\textbf{A}(S_{m}),z_i) + f(\textbf{A}(S_{m}),z_m)\\
= & \sum_{i=1}^{m-1} f(\textbf{A}(S_{m-1}),z_i) + \sum_{j=1}^{m-1} [f(\textbf{A}(S_{m}),z_j)- f(\textbf{A}(S_{m-1}),z_j)] + f(\textbf{A}(S_{m}),z_m)\\
\dots & \dots\\
= & \sum_{i=1}^{m} f(\textbf{A}(S_{i}),z_i) + \sum_{i=1}^{m-1} \sum_{j=1}^{i} [f(\textbf{A}(S_{i+1}),z_j)-f(\textbf{A}(S_{i}),z_j)]\\
\end{array}
\end{displaymath}
This proves the lemma.
\end{proof}

From this lemma we can immediately see that for any online stable always AERM algorithm $\textbf{A}$ we obtain the following:
\begin{corollary} \label{corLOOAERM}
For any online stable always AERM algorithm $\textbf{A}$:
\begin{equation}
R_m(\textbf{A}) \leq \sum_{i=1}^m \epsilon_{\textrm{on-stable}}(i) + m \epsilon_{\textrm{erm}}(m) + \sum_{i=1}^{m-1} \sum_{j=1}^i [ f(\textbf{A}(S_{i}),z_j) - f(\textbf{A}(S_{i+1}),z_j) ]
\end{equation}
\end{corollary}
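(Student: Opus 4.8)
The plan is to invoke Lemma \ref{lemAll} and bound each of its three terms separately; the first two collapse immediately from the two hypotheses on $\textbf{A}$ (online stability and always AERM), while the third is simply carried over verbatim into the statement.

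First I would handle the term $\sum_{i=1}^m [f(\textbf{A}(S_{i-1}),z_i) - f(\textbf{A}(S_{i}),z_i)]$. The key observation is that $S_{i-1}$ is exactly $S_i$ with its last data point $z_i$ deleted, i.e. $S_{i-1} = S_i^{\backslash i}$, and $S_i$ has size $i$. Online stability applied to the dataset $S_i$ then yields $|f(\textbf{A}(S_i^{\backslash i}),z_i) - f(\textbf{A}(S_i),z_i)| \leq \epsilon_{\textrm{on-stable}}(i)$ for each $i$. Summing over $i$ gives $\sum_{i=1}^m [f(\textbf{A}(S_{i-1}),z_i) - f(\textbf{A}(S_{i}),z_i)] \leq \sum_{i=1}^m \epsilon_{\textrm{on-stable}}(i)$, which is the first term in the corollary.

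Next I would treat the middle term $\sum_{i=1}^m f(\textbf{A}(S_{m}),z_i) - \min_{h \in \mathcal{H}} \sum_{i=1}^m f(h,z_i)$. Rewriting both sums in terms of the empirical risk on $S_m$ (a dataset of exactly $m$ points), this quantity equals $m[F_{S_m}(\textbf{A}(S_m)) - \min_{h \in \mathcal{H}} F_{S_m}(h)]$. The always AERM property at size $m$ bounds the bracketed empirical-risk gap by $\epsilon_{\textrm{erm}}(m)$, so this term is at most $m\,\epsilon_{\textrm{erm}}(m)$, matching the second term in the corollary.

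Finally, the double sum $\sum_{i=1}^{m-1} \sum_{j=1}^i [f(\textbf{A}(S_{i}),z_j) - f(\textbf{A}(S_{i+1}),z_j)]$ is left untouched and appears identically in the statement, so combining the three bounds yields the claimed inequality. There is essentially no obstacle here; the only thing to verify carefully is the index bookkeeping that identifies $S_{i-1}$ as the leave-\emph{last}-out version of the size-$i$ dataset $S_i$, so that online stability---which by definition controls deletion of the last point of a size-$m$ set---applies at rate $\epsilon_{\textrm{on-stable}}(i)$, and similarly that the always AERM rate is evaluated at the correct size $m$ for the full dataset $S_m$.
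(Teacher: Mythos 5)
Your proposal is correct and matches the paper's own proof essentially step for step: both invoke Lemma \ref{lemAll}, bound the first term by identifying $S_{i-1} = S_i^{\backslash i}$ and applying online stability at rate $\epsilon_{\textrm{on-stable}}(i)$, bound the middle term by the always AERM property at size $m$, and carry the double summation through unchanged. Your explicit note about the index bookkeeping (that the deleted point is the \emph{last} one of $S_i$, which is precisely what online stability controls) is the same observation the paper makes implicitly via the identity $S_{i-1} = S^{\backslash i}_{i}$.
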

\begin{proof}
By online stability we have that for all $i$:
\begin{displaymath}
f(\textbf{A}(S_{i-1}),z_i) - f(\textbf{A}(S_{i}),z_i)\\
\leq |f(\textbf{A}(S_{i-1}),z_i) - f(\textbf{A}(S_{i}),z_i)|\\
= |f(\textbf{A}(S^{\backslash i}_{i}),z_i) - f(\textbf{A}(S_{i}),z_i)|\\
\leq \epsilon_{\textrm{on-stable}}(i)\\
\end{displaymath}
and since $\textbf{A}$ is always AERM it follows by definition that:
\begin{displaymath}
\sum_{i=1}^m f(\textbf{A}(S_{m}),z_i) - \min_{h \in \mathcal{H}} \sum_{i=1}^m f(h,z_i) \leq m \epsilon_{\textrm{erm}}(m)
\end{displaymath}
\end{proof}

We will now seek to upper bound the extra double summation part. For an ERM it can easily be seen that:
\begin{lemma}
For any ERM algorithm $\textbf{A}$:
\begin{equation}
\sum_{i=1}^{m-1} \sum_{j=1}^i [ f(\textbf{A}(S_{i}),z_j) - f(\textbf{A}(S_{i+1}),z_j) ] \leq 0
\end{equation}
\end{lemma}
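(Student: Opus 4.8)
The plan is to prove the inequality term by term in the outer summation index $i$, showing that each inner sum $\sum_{j=1}^i [f(\textbf{A}(S_i),z_j) - f(\textbf{A}(S_{i+1}),z_j)]$ is already nonpositive on its own. The only ingredient I expect to need is the defining optimality property of an ERM, so the argument should be short.

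First I would fix an arbitrary $i \in \{1,2,\dots,m-1\}$ and apply the ERM definition to the dataset $S_i$, which has size $i$. By definition, $\textbf{A}(S_i)$ minimizes the empirical risk $F_{S_i}(h) = \frac{1}{i}\sum_{j=1}^i f(h,z_j)$ over all $h \in \mathcal{H}$. Since $\textbf{A}(S_{i+1})$ is itself a perfectly valid element of $\mathcal{H}$, it is one of the competitors in this minimization, and hence $F_{S_i}(\textbf{A}(S_i)) \leq F_{S_i}(\textbf{A}(S_{i+1}))$. Multiplying through by $i$ gives
\begin{equation}
\sum_{j=1}^i f(\textbf{A}(S_i),z_j) \leq \sum_{j=1}^i f(\textbf{A}(S_{i+1}),z_j),
\end{equation}
which is exactly the statement that the inner sum $\sum_{j=1}^i [f(\textbf{A}(S_i),z_j) - f(\textbf{A}(S_{i+1}),z_j)] \leq 0$.

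Finally I would sum these nonpositive quantities over $i = 1,\dots,m-1$; summing nonpositive terms preserves the sign, which yields the claimed bound. There is essentially no real obstacle here, as the result follows immediately from the ERM optimality of $\textbf{A}(S_i)$ on its \emph{own} training set $S_i$ combined with the observation that $\textbf{A}(S_{i+1})$ is just some fixed hypothesis, not a minimizer of $F_{S_i}$. The one point to be careful about is to invoke the ERM property at the correct dataset size, namely $i$ rather than $i+1$, for each term of the outer sum; matching the empirical risk being minimized ($F_{S_i}$, over the first $i$ points) with the hypothesis that minimizes it ($\textbf{A}(S_i)$) is what makes every inner sum collapse to a nonpositive quantity.
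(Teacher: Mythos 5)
Your proof is correct and follows exactly the same route as the paper's: for each fixed $i$, the ERM optimality of $\textbf{A}(S_i)$ on $S_i$ makes $\sum_{j=1}^i f(\textbf{A}(S_i),z_j) \leq \sum_{j=1}^i f(\textbf{A}(S_{i+1}),z_j)$ since $\textbf{A}(S_{i+1}) \in \mathcal{H}$ is an admissible competitor, and summing the nonpositive inner sums over $i$ finishes the argument. Your version is simply a more carefully spelled-out rendering of the paper's one-line proof, including the (correct) caution about invoking optimality at dataset size $i$ rather than $i+1$.
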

\begin{proof}
Follows immediately since $\sum_{j=1}^i f(\textbf{A}(S_{i}),z_j)$ is optimal hence for any other hypothesis $h$, in particular $\textbf{A}(S_{i+1})$, $\sum_{j=1}^i f(\textbf{A}(S_{i}),z_j) \leq \sum_{j=1}^i f(\textbf{A}(S_{i+1}),z_j)$.
\end{proof}

Since an ERM has $\epsilon_{\textrm{erm}}(m) = 0$ for all $m$, then it can be seen directly that an ERM has no regret if it is online stable, as $\frac{R_m(\textbf{A})}{m} \leq \frac{1}{m} \sum_{i=1}^m \epsilon_{\textrm{on-stable}}(i)$.

For general RERM this double summation can be bounded by:
\begin{lemma} \label{lemRAERM}
For any RERM algorithm $\textbf{A}$:
\begin{equation}
\sum_{i=1}^{m-1} \sum_{j=1}^i [ f(\textbf{A}(S_{i}),z_j) - f(\textbf{A}(S_{i+1}),z_j) ] \leq \sum_{i=0}^{m-1} \rho_i
\end{equation}
\end{lemma}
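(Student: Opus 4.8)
The plan is to first reorganize the double sum by swapping the order of summation, which produces a telescoping collapse that turns the quadratic-looking object into a clean ``be-the-leader'' quantity. Writing $h_i := \textbf{A}(S_i)$ and interchanging the two summations, for each fixed $j$ the index $i$ ranges over $j,\dots,m-1$, and the summand $f(h_i,z_j)-f(h_{i+1},z_j)$ telescopes in $i$. Hence
\begin{equation}
\sum_{i=1}^{m-1}\sum_{j=1}^i [f(h_i,z_j)-f(h_{i+1},z_j)] = \sum_{j=1}^{m-1}[f(h_j,z_j)-f(h_m,z_j)].
\end{equation}
The right-hand side is exactly a \emph{be-the-(regularized)-leader} expression: $f(h_j,z_j)$ is the loss of the regularized leader $h_j$ (computed with $z_j$ already in the dataset) on the point $z_j$, and $f(h_m,z_j)$ is the loss of the final leader $h_m$ on the same point.

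Next I would prove a be-the-leader inequality by induction. Let $\tilde G_n(h) = r_0(h) + \sum_{j=1}^n [f(h,z_j) + r_j(h)]$, so that the RERM property gives $h_n = \argmin_{h}\tilde G_n(h)$. The inductive claim is that for every $u \in \mathcal{H}$,
\begin{equation}
\sum_{j=1}^n f(h_j,z_j) \le \sum_{j=1}^n f(u,z_j) + r_0(u) + \sum_{k=1}^n r_k(u) - r_0(h_1) - \sum_{k=1}^n r_k(h_k).
\end{equation}
The base case $n=1$ is just the optimality statement $\tilde G_1(h_1)\le \tilde G_1(u)$. For the inductive step I would apply the hypothesis at $n-1$ with comparator $u=h_n$, add $f(h_n,z_n)$ to both sides so that the unregularized losses reassemble into $\sum_{j=1}^n f(h_n,z_j)$, and then invoke optimality $\tilde G_n(h_n)\le \tilde G_n(u)$ once more. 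The regularizer bookkeeping is arranged precisely so that the $r_k(h_n)$ and $r_k(h_k)$ contributions collapse into $\tilde G_n(h_n)$, closing the induction.

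Finally I would instantiate $n=m-1$ and $u=h_m$. Moving $\sum_{j=1}^{m-1}f(h_m,z_j)$ to the left recovers exactly $\sum_{j=1}^{m-1}[f(h_j,z_j)-f(h_m,z_j)]$, with leftover term $[r_0(h_m)-r_0(h_1)] + \sum_{k=1}^{m-1}[r_k(h_m)-r_k(h_k)]$. Bounding each difference using $\sup_{h,h'}|r_i(h)-r_i(h')|\le\rho_i$ gives $\sum_{i=0}^{m-1}\rho_i$, which is the claim. The main obstacle is that the naive per-term bound is far too lossy: bounding each inner sum $\sum_{j=1}^i[f(h_i,z_j)-f(h_{i+1},z_j)]$ directly through the optimality of $h_i$ yields $\sum_{j=0}^i\rho_j$ for each $i$, and summing over $i$ gives the much larger $\sum_{i=1}^{m-1}\sum_{j=0}^i\rho_j$. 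The essential point is therefore to telescope in $i$ \emph{before} bounding, so that each $\rho_i$ is charged only once; the second use of optimality (that of $h_m$) inside the induction is exactly what converts the telescoped be-the-leader quantity into the desired $\sum_{i=0}^{m-1}\rho_i$ bound.
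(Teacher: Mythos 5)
Your proof is correct, and while it ends in the same place as the paper's, it organizes the argument differently. The paper never telescopes the losses: it inserts the regularizers into each inner sum and invokes optimality of $\textbf{A}(S_i)$ for the regularized objective $\sum_{j=1}^i f(\cdot,z_j)+\sum_{j=0}^i r_j(\cdot)$ with comparator $\textbf{A}(S_{i+1})$, bounding the inner sum by $\sum_{j=0}^i[r_j(\textbf{A}(S_{i+1}))-r_j(\textbf{A}(S_i))]$, and only then telescopes these regularizer differences across $i$, so that each $\rho_j$ is charged once. You do the reverse: telescope the losses first, reducing the double sum to the be-the-leader quantity $\sum_{j=1}^{m-1}[f(h_j,z_j)-f(h_m,z_j)]$, then bound it by the classical regularized be-the-leader inequality proved by induction. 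The two routes are equivalent at the level of inequalities used: unrolling your induction with final comparator $u=h_m$ invokes exactly the same $m-1$ optimality statements $\tilde G_i(h_i)\le \tilde G_i(h_{i+1})$ (in your notation) that the paper sums directly, and both produce the identical remainder $[r_0(h_m)-r_0(h_1)]+\sum_{k=1}^{m-1}[r_k(h_m)-r_k(h_k)]\le\sum_{i=0}^{m-1}\rho_i$. What your packaging buys is a reusable standalone lemma (the standard FTRL be-the-leader bound) and an explicit diagnosis of why the naive per-$i$ bound $\sum_{j=0}^i\rho_j$ is too lossy --- which is precisely the trap the paper's telescoping of regularizer differences also avoids; the paper's version is shorter but leaves that point implicit. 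Incidentally, your bookkeeping is slightly more careful than the paper's: in the paper's telescoped expression $\sum_{i=0}^{m-1}[r_i(\textbf{A}(S_m))-r_i(\textbf{A}(S_i))]$, the $i=0$ term should read $r_0(\textbf{A}(S_1))$ rather than $r_0(\textbf{A}(S_0))$, an immaterial slip since either difference is bounded by $\rho_0$.
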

\begin{proof}
\begin{displaymath}
\begin{array}{rl}
\multicolumn{2}{l}{\sum_{i=1}^{m-1} \sum_{j=1}^i [ f(\textbf{A}(S_{i}),z_j) - f(\textbf{A}(S_{i+1}),z_j) ]} \\
= & \sum_{i=1}^{m-1} [ \sum_{j=1}^i [ f(\textbf{A}(S_{i}),z_j) ] + \sum_{j=0}^i [r_j(\textbf{A}(S_{i})) - r_j(\textbf{A}(S_{i}))] \\
& - \sum_{j=1}^i [ f(\textbf{A}(S_{i+1}),z_j)] - \sum_{j=0}^i [ r_j(\textbf{A}(S_{i+1})) - r_j(\textbf{A}(S_{i+1})) ] ]\\
\leq & \sum_{i=1}^{m-1} \sum_{j=0}^i [r_j(\textbf{A}(S_{i+1})) - r_j(\textbf{A}(S_{i}))]\\
= & \sum_{i=0}^{m-1} [r_i(\textbf{A}(S_{m})) - r_{i}(\textbf{A}(S_{i}))]\\
\leq & \sum_{i=0}^{m-1} \rho_i\\
\end{array}
\end{displaymath}
%\begin{displaymath}
%\begin{array}{rl}
%\multicolumn{2}{l}{\sum_{i=1}^{m-1} \sum_{j=1}^i [ f(\textbf{A}(S_{i}),z_j) - f(\textbf{A}(S_{i+1}),z_j) ]} \\
%= & \sum_{i=1}^{m-1} [ \sum_{j=1}^i [ f(\textbf{A}(S_{i}),z_j) ] + \lambda_i r(\textbf{A}(S_{i})) - \sum_{j=1}^i [ f(\textbf{A}(S_{i+1}),z_j) ] - \lambda_i r(\textbf{A}(S_{i+1})) + \lambda_i r(\textbf{A}(S_{i+1})) - \lambda_i r(\textbf{A}(S_{i}))]\\
%\leq & \sum_{i=1}^{m-1} \lambda_i [r(\textbf{A}(S_{i+1})) - r(\textbf{A}(S_{i}))]\\
%= & \lambda_{m-1} r(\textbf{A}(S_{m})) - \lambda_1 r(\textbf{A}(S_{1})) + \sum_{i=2}^{m-1} [\lambda_{i-1}-\lambda_i] r(\textbf{A}(S_{i}))\\
%\leq & \lambda_{m-1} r(\textbf{A}(S_{m}))\\
%= & B_R \lambda_{m-1}\\
%\leq & B_R \lambda_m \\
%\end{array}
%\end{displaymath}
\end{proof}

Combining this result with Corollary \ref{corLOOAERM} proves our main result in Theorem \ref{thmRAERM}, using the fact that a RERM is always AERM at rate $\frac{1}{m} \sum_{i=0}^{m} \rho_i$.

It is however harder to bound this double summation by a term that becomes negligible (when looking at the average regret) for general always AERM. We can show the following:
\begin{lemma} %\label{lemRAERM}
For any always AERM algorithm $\textbf{A}$:
\begin{equation}
\sum_{i=1}^{m-1} \sum_{j=1}^i [ f(\textbf{A}(S_{i}),z_j) - f(\textbf{A}(S_{i+1}),z_j) ] \leq \sum_{i=1}^{m-1} i \epsilon_{\textrm{erm}}(i)
\end{equation}
\end{lemma}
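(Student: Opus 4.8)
The plan is to bound the inner summation separately for each fixed $i$, mirroring the argument already used for the ERM case but allowing for the approximation slack of an always AERM. The key observation is that in the inner sum $\sum_{j=1}^i [f(\textbf{A}(S_i),z_j) - f(\textbf{A}(S_{i+1}),z_j)]$, the hypothesis $\textbf{A}(S_i)$ is trained on exactly the dataset $S_i$ over which the sum runs, so it is an approximate minimizer there, whereas $\textbf{A}(S_{i+1})$ is, for the purposes of the sum over $j=1,\dots,i$, just an arbitrary element of $\mathcal{H}$.

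First I would restate the always-AERM property in un-normalized (additive) form. Since $F_S(h)=\frac{1}{|S|}\sum_j f(h,z_j)$, multiplying the definition of always AERM applied to the size-$i$ dataset $S_i$ by $i$ gives
\begin{displaymath}
\sum_{j=1}^i f(\textbf{A}(S_i),z_j) \;\leq\; \min_{h\in\mathcal{H}}\sum_{j=1}^i f(h,z_j) + i\,\epsilon_{\textrm{erm}}(i).
\end{displaymath}
Next, for the subtracted term, I would use that $\textbf{A}(S_{i+1})\in\mathcal{H}$ is a feasible hypothesis for the minimization over $S_i$, hence $\sum_{j=1}^i f(\textbf{A}(S_{i+1}),z_j) \geq \min_{h\in\mathcal{H}}\sum_{j=1}^i f(h,z_j)$. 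Subtracting this lower bound from the AERM upper bound, the two $\min_h$ terms cancel and the inner sum is bounded by $i\,\epsilon_{\textrm{erm}}(i)$.

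Finally I would sum this per-$i$ bound over $i$ from $1$ to $m-1$ to obtain the claimed inequality $\sum_{i=1}^{m-1}\sum_{j=1}^i[f(\textbf{A}(S_i),z_j)-f(\textbf{A}(S_{i+1}),z_j)] \leq \sum_{i=1}^{m-1} i\,\epsilon_{\textrm{erm}}(i)$.

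I do not expect a genuine obstacle here: the argument is a direct softening of the ERM lemma, where the inner sum was exactly $\leq 0$ because $\textbf{A}(S_i)$ was the exact minimizer; the only change is that the AERM slack replaces the exact optimality, introducing the $i\,\epsilon_{\textrm{erm}}(i)$ term. The only point worth flagging is the quality of the resulting bound rather than its validity: the right-hand side $\sum_{i=1}^{m-1} i\,\epsilon_{\textrm{erm}}(i)$ is only $o(m)$ (and hence only yields vanishing average regret after dividing by $m$) when $\epsilon_{\textrm{erm}}(i)=o(1/i)$, which is exactly the source of the $o(\frac{1}{m})$ rate requirement appearing in Theorem~\ref{thmAERM} and explains why the always-AERM result is weaker than the RERM result, where the analogous double sum was controlled by the telescoping regularizer bound $\sum_{i=0}^{m-1}\rho_i$ instead.
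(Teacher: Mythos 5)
Your proof is correct and is essentially identical to the paper's: the paper likewise lower-bounds $\sum_{j=1}^i f(\textbf{A}(S_{i+1}),z_j)$ by $\min_{h \in \mathcal{H}} \sum_{j=1}^i f(h,z_j)$ and then applies the (un-normalized) always-AERM property of $\textbf{A}(S_i)$ on the size-$i$ dataset to get the $i\,\epsilon_{\textrm{erm}}(i)$ term, summing over $i$. Your closing remark about the $o(1/m)$ rate requirement also matches the paper's own discussion of why the always-AERM result in Theorem~\ref{thmAERM} is weaker than the RERM case.
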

\begin{proof}
\begin{displaymath}
\begin{array}{rl}
\multicolumn{2}{l}{\sum_{i=1}^{m-1} \sum_{j=1}^i [ f(\textbf{A}(S_{i}),z_j) - f(\textbf{A}(S_{i+1}),z_j) ]}\\
\leq & \sum_{i=1}^{m-1} [ \sum_{j=1}^i f(\textbf{A}(S_{i}),z_j) - \min_{h \in \mathcal{H}} \sum_{j=1}^i f(h,z_j) ] \\
\leq & \sum_{i=1}^{m-1} i \epsilon_{\textrm{erm}}(i)\\
\end{array}
\end{displaymath}
\end{proof}

This proves case (1) of Theorem \ref{thmAERM} when combining with Corollary \ref{corLOOAERM}. If we have a symmetric always AERM that is uniform LOO stable and uniform RO stable then we can also show:
\begin{lemma} %\label{lemRAERM}
For any symmetric always AERM algorithm $\textbf{A}$ that is both uniform LOO stable and uniform RO stable:
\begin{equation}
\sum_{i=1}^{m-1} \sum_{j=1}^i [ f(\textbf{A}(S_{i}),z_j) - f(\textbf{A}(S_{i+1}),z_j) ] \leq \sum_{i=1}^{m-1} i [\epsilon_{\textrm{loo-stable}}(i) + \epsilon_{\textrm{ro-stable}}(i)]
\end{equation}
\end{lemma}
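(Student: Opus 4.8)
The plan is to bound the inner double sum one outer index at a time: for each fixed $i \in \{1,\dots,m-1\}$ I would show
\[
\sum_{j=1}^i \big[ f(\textbf{A}(S_i),z_j) - f(\textbf{A}(S_{i+1}),z_j) \big] \leq i\,\big[\epsilon_{\textrm{loo-stable}}(i) + \epsilon_{\textrm{ro-stable}}(i)\big],
\]
and then sum over $i$ to obtain the stated bound. The difference being summed compares $\textbf{A}(S_i)$ with $\textbf{A}(S_{i+1})$, two hypotheses whose training sets differ only by the extra point $z_{i+1}$, both evaluated on the old points $z_1,\dots,z_i$. Evaluating the $(i{+}1)$-point leader on a point that was \emph{not} deleted is exactly the quantity that weak uniform-LOO stability does not control directly, so the first task is to route this difference through intermediate datasets on which the relevant point is the one being deleted or replaced.

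The central device is the dataset $S_{i+1}^{\backslash j}$ obtained from $S_{i+1}$ by deleting $z_j$. Because $\textbf{A}$ is symmetric, $S_{i+1}^{\backslash j}$ can equivalently be read as $S_i$ with its $j$-th point $z_j$ replaced by $z_{i+1}$; this is the step where symmetry is essential. For each $j \le i$ I would then split
\[
f(\textbf{A}(S_i),z_j) - f(\textbf{A}(S_{i+1}),z_j) = \big[ f(\textbf{A}(S_i),z_j) - f(\textbf{A}(S_{i+1}^{\backslash j}),z_j) \big] + \big[ f(\textbf{A}(S_{i+1}^{\backslash j}),z_j) - f(\textbf{A}(S_{i+1}),z_j) \big].
\]
The second bracket is a deletion of $z_j$ from the size-$(i{+}1)$ dataset $S_{i+1}$, evaluated precisely at the deleted point, so uniform-LOO stability bounds its magnitude by $\epsilon_{\textrm{loo-stable}}(i+1) \leq \epsilon_{\textrm{loo-stable}}(i)$, using monotonicity of the rate. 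Summed over the $i$ values of $j$ this contributes at most $i\,\epsilon_{\textrm{loo-stable}}(i)$, the LOO half of the claimed bound. For the first bracket, $S_i$ and $S_{i+1}^{\backslash j}$ have the same size $i$ and differ in a single point ($z_j$ versus $z_{i+1}$), so it is a replace-one difference on a size-$i$ dataset; I would control it with uniform-RO stability to extract the remaining $i\,\epsilon_{\textrm{ro-stable}}(i)$, viewing all these differences as perturbations of the \emph{common} base $S_i$ with the single replacement point $z_{i+1}$, so that one invocation of uniform-RO stability (which bounds the average over the replaced index) produces the factor $i$.

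The hard part will be precisely this RO step. Uniform-RO stability as defined fixes both the base dataset and a single evaluation point $z'$ and only bounds the average over which index is replaced, whereas our sum evaluates the $j$-th replacement at the point $z_j$ that is itself being replaced, i.e. an evaluation point coupled to the replaced index. Reconciling these is the delicate point: symmetry of $\textbf{A}$ must be invoked to present every $S_{i+1}^{\backslash j}$ as a one-point perturbation of the single base $S_i$, and one must be careful that evaluating at the replaced point does not covertly require the stronger strongly-uniform-RO (or Bousquet-style uniform) stability that we are deliberately avoiding. Once the inner estimate $i\,[\epsilon_{\textrm{loo-stable}}(i)+\epsilon_{\textrm{ro-stable}}(i)]$ is secured, summing over $i=1,\dots,m-1$ and feeding the result into Lemma~\ref{lemAll} through Corollary~\ref{corLOOAERM} yields case (2) of Theorem~\ref{thmAERM}.
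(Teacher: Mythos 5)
Your proposal is essentially identical to the paper's own proof: the same splitting of each term through the intermediate dataset $S_{i+1}^{\backslash j}$, the same appeal to symmetry to read $S_{i+1}^{\backslash j}$ as $S_i^{(j)}$ with $z_j$ replaced by $z_{i+1}$, the LOO bracket bounded by uniform-LOO stability (with the monotonicity $\epsilon_{\textrm{loo-stable}}(i+1) \leq \epsilon_{\textrm{loo-stable}}(i)$ that the paper leaves implicit), and the RO bracket charged to uniform-RO stability to produce $i\,\epsilon_{\textrm{ro-stable}}(i)$. The ``delicate point'' you flag---that the definition of uniform-RO stability fixes a single evaluation point $z'$ while the sum here evaluates the $j$-th replacement at the coupled point $z_j$---is not actually resolved in the paper either: the authors assert the diagonal bound directly ``by definition of uniform RO stability,'' so your concern pinpoints a looseness in the paper's own argument rather than a gap specific to your plan.
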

\begin{proof}
\begin{displaymath}
\begin{array}{rl}
\multicolumn{2}{l}{\sum_{i=1}^{m-1} \sum_{j=1}^i [ f(\textbf{A}(S_{i}),z_j) - f(\textbf{A}(S_{i+1}),z_j) ]}\\
= & \sum_{i=1}^{m-1} \sum_{j=1}^i [ f(\textbf{A}(S_{i}),z_j) - f(\textbf{A}(S^{\backslash j}_{i+1}),z_j) + f(\textbf{A}(S^{\backslash j}_{i+1}),z_j) - f(\textbf{A}(S_{i+1}),z_j) ] \\
\end{array}
\end{displaymath}
For symmetric algorithms, the terms $\sum_{j=1}^i [ f(\textbf{A}(S_{i}),z_j) - f(\textbf{A}(S^{\backslash j}_{i+1}),z_j) ]$ are related to RO stability as $S^{\backslash j}_{i+1})$ corresponds to $S^{(j)}_{i}$ where we replace $z_j$ by $z_{i+1}$. Hence for symmetric algorithms, by definition of uniform RO stability we have: $\sum_{j=1}^i [ f(\textbf{A}(S_{i}),z_j) - f(\textbf{A}(S^{\backslash j}_{i+1}),z_j) ] \leq i \epsilon_{\textrm{ro-stable}}(i)$. Furthermore by definition of uniform LOO stability, the terms $\sum_{j=1}^i f(\textbf{A}(S^{\backslash j}_{i+1}),z_j) - f(\textbf{A}(S_{i+1}),z_j) \leq i \epsilon_{\textrm{loo-stable}}(i)$. This proves the lemma.
\end{proof}

This lemma proves case (2) of Theorem \ref{thmAERM} when combining with Corollary \ref{corLOOAERM}.

Now we show that strong convexity, either in $f$ or in $r_i$ when $f$ is only convex, implies uniform-LOO stability:
\begin{lemma}\label{lemStrConvLoss}
For any ERM $\textbf{A}$: If $\mathcal{H}$ is a convex set, and for some norm $||\cdot||$ on $\mathcal{H}$ we have that at all $z \in Z$: $f(\cdot{},z)$ is $L$-Lipschitz continuous in $||\cdot||$ and $\nu$-strongly convex in $||\cdot||$, then $\textbf{A}$ is uniform-LOO stable at rate $\epsilon_{\textrm{loo-stable}}(m) \leq \frac{2 L^2}{m \nu}$.
\end{lemma}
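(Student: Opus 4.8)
The plan is to bound the distance $\|\textbf{A}(S) - \textbf{A}(S^{\backslash i})\|$ directly using strong convexity, and then convert it to a loss bound via the Lipschitz property. Fix a dataset $S$ of size $m$ and an index $i$, and write $h = \textbf{A}(S)$ and $h' = \textbf{A}(S^{\backslash i})$. Since each $f(\cdot{},z)$ is $\nu$-strongly convex in $\|\cdot{}\|$, the empirical risk $F_S = \frac{1}{m}\sum_{j=1}^m f(\cdot{},z_j)$ is itself $\nu$-strongly convex (and so is $F_{S^{\backslash i}}$, being an average of $\nu$-strongly convex functions); in particular each minimizer is unique, so $h$ and $h'$ are well defined.

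First I would extract a quadratic lower bound from the optimality of $h$. Because $h$ minimizes the $\nu$-strongly convex $F_S$ over the convex set $\mathcal{H}$, the first-order optimality (variational) inequality together with strong convexity gives, for the feasible point $h'$,
\[
F_S(h') - F_S(h) \geq \tfrac{\nu}{2}\|h - h'\|^2 .
\]
Next I would produce a matching upper bound on the same difference by isolating the held-out point. Writing $m F_S = (m-1)F_{S^{\backslash i}} + f(\cdot{},z_i)$ and using that $h'$ minimizes $F_{S^{\backslash i}}$, so that $F_{S^{\backslash i}}(h') - F_{S^{\backslash i}}(h) \leq 0$, the $(m-1)F_{S^{\backslash i}}$ contribution drops out with the favorable sign, leaving
\[
F_S(h') - F_S(h) \leq \tfrac{1}{m}\bigl(f(h',z_i) - f(h,z_i)\bigr) \leq \tfrac{L}{m}\|h - h'\| ,
\]
where the last inequality is $L$-Lipschitz continuity.

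Chaining the two displays yields $\frac{\nu}{2}\|h-h'\|^2 \leq \frac{L}{m}\|h-h'\|$, hence $\|h - h'\| \leq \frac{2L}{m\nu}$ (the case $h = h'$ being trivial). A final application of Lipschitz continuity then gives $|f(h',z_i) - f(h,z_i)| \leq L\|h-h'\| \leq \frac{2L^2}{m\nu}$, which is exactly the claimed rate; since $S$ and $i$ were arbitrary, uniform-LOO stability follows. I expect the only delicate step to be justifying the strong-convexity lower bound at a \emph{constrained} minimizer without assuming differentiability or an interior optimum, i.e. correctly invoking the subgradient/variational-inequality optimality condition for $h$ over the convex set $\mathcal{H}$; once that is in place, the remaining steps are routine algebra and two applications of the Lipschitz bound.
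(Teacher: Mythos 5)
Your proposal is correct and follows essentially the same route as the paper: bound $\|\textbf{A}(S)-\textbf{A}(S^{\backslash i})\|$ by combining a strong-convexity lower bound at the minimizer of $F_S$ with an upper bound from the optimality of $\textbf{A}(S^{\backslash i})$ on $S^{\backslash i}$, then apply Lipschitz continuity twice. The one step you flag as delicate --- justifying $F_S(h') - F_S(h) \geq \frac{\nu}{2}\|h-h'\|^2$ at a constrained, possibly nondifferentiable minimizer --- is handled in the paper without any subgradient machinery, by applying the definition of strong convexity to the convex combination $\alpha \textbf{A}(S^{\backslash i}) + (1-\alpha)\textbf{A}(S)$, using that this point lies in the convex set $\mathcal{H}$ and hence has empirical risk at least $F_S(\textbf{A}(S))$, and letting $\alpha \to 0$ at the end, which yields exactly your inequality.
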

\begin{proof}
By Lipschitz continuity we have $|f(A(S^{\backslash i}),z_i) - f(A(S),z_i)| \leq L ||A(S^{\backslash i}) - A(S)||$. We can use strong convexity to bound $||A(S^{\backslash i}) - A(S)||$: For all $\alpha \in (0,1)$ we have:
\begin{displaymath}
\begin{array}{rl}
\multicolumn{2}{l}{\sum_{j=1}^m \alpha f(A(S^{\backslash i}),z_j) + (1-\alpha) f(A(S),z_j)}\\
\geq & \sum_{j=1}^m f( \alpha A(S^{\backslash i}) + (1-\alpha) A(S), z_j ) + \frac{\alpha (1-\alpha) m \nu}{2} ||A(S^{\backslash i}) - A(S)||^2\\
\geq & \sum_{j=1}^m f( A(S), z_j ) + \frac{\alpha (1-\alpha) m \nu}{2} ||A(S^{\backslash i}) - A(S)||^2\\
\end{array}
\end{displaymath}
where the last inequality follows from the fact that $A(S)$ is the ERM on $S$. So we obtain for all $\alpha \in (0,1)$: $||A(S^{\backslash i}) - A(S)||^2 \leq \frac{2}{m \nu (1-\alpha)} \sum_{j=1}^m [f(A(S^{\backslash i}),z_j) - f(A(S),z_j)]$.

Since $A(S^{\backslash i})$ is the ERM on $S^{\backslash i}$, then $\sum_{j=1| j \neq i}^m f(A(S),z_j) \geq \sum_{j=1 | j \neq i}^m [f(A(S^{\backslash i}),z_j)$ so:
\begin{displaymath}
\begin{array}{rl}
||A(S^{\backslash i}) - A(S)||^2 
\leq & \frac{2}{m \nu (1-\alpha)} \sum_{j=1}^m [f(A(S^{\backslash i}),z_j) - f(A(S),z_j)]\\
\leq & \frac{2}{m \nu (1-\alpha)} [f(A(S^{\backslash i}),z_i) - f(A(S),z_i)] \\
\leq & \frac{2}{m \nu (1-\alpha)} L ||A(S^{\backslash i}) - A(S)||\\
\end{array}
\end{displaymath}
Hence we conclude $||A(S^{\backslash i}) - A(S)|| \leq \frac{2}{m \nu (1-\alpha)} L$. Since this holds for all $\alpha \in (0,1)$ then we conclude $||A(S^{\backslash i}) - A(S)|| \leq \frac{2}{m \nu} L$. This proves the lemma.
\end{proof}

\begin{lemma} \label{lemStrConvReg}
For any RERM $\textbf{A}$: If $\mathcal{H}$ is a convex set, and for some norm $||\cdot||$ on $\mathcal{H}$ we have that at all $z \in Z$, $f(\cdot{},z)$ is convex and $L$-Lipschitz continuous in $||\cdot||$, and for all $i$, $r_i$ is $L^i_R$-Lipschitz continuous in $||\cdot||$ and $\nu_i$-strongly convex in $||\cdot||$, then $\textbf{A}$ is uniform-LOO stable at rate $\epsilon_{\textrm{loo-stable}}(m) \leq \frac{2 L [L + L^m_R]}{\sum_{i=0}^m \nu_i}$. 
\end{lemma}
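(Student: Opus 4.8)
The plan is to mirror the structure of Lemma~\ref{lemStrConvLoss}, but to draw the strong convexity from the aggregate regularizer rather than from the loss, since here $f$ is only assumed convex. Write $a = \mathbf{A}(S^{\backslash i})$ and $b = \mathbf{A}(S)$. By $L$-Lipschitz continuity of $f(\cdot,z_i)$ it suffices to bound $\|a-b\|$, because $|f(a,z_i)-f(b,z_i)| \leq L\|a-b\|$. The key structural observation is that the RERM objective on the full set $S$ (size $m$) and on $S^{\backslash i}$ (size $m-1$) share their first $m-1$ regularizers: writing $G(h) = \sum_{j=0}^m r_j(h) + \sum_{j=1}^m f(h,z_j)$ for the size-$m$ objective and $G'(h) = \sum_{j=0}^{m-1} r_j(h) + \sum_{j \neq i} f(h,z_j)$ for the size-$(m-1)$ objective, we have the exact decomposition $G(h) = G'(h) + r_m(h) + f(h,z_i)$. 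By definition $b$ minimizes $G$ and $a$ minimizes $G'$.

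First I would record the strong convexity of $G$. Since each $r_j$ is $\nu_j$-strongly convex and each $f(\cdot,z_j)$ is convex, $G$ is $\mu$-strongly convex with $\mu = \sum_{j=0}^m \nu_j$. As in Lemma~\ref{lemStrConvLoss}, I would express this through the convex-combination form to avoid assuming differentiability: for any $\alpha \in (0,1)$, $\alpha G(a) + (1-\alpha)G(b) \geq G(\alpha a + (1-\alpha)b) + \frac{\alpha(1-\alpha)\mu}{2}\|a-b\|^2$, and since $b$ minimizes $G$ over the convex set $\mathcal{H}$ the first term on the right is at least $G(b)$. Rearranging gives $G(a) - G(b) \geq \frac{(1-\alpha)\mu}{2}\|a-b\|^2$.

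Next I would upper bound the same quantity $G(a)-G(b)$ using the decomposition and the optimality of $a$ for $G'$: $G(a)-G(b) = [G'(a)-G'(b)] + [r_m(a)-r_m(b)] + [f(a,z_i)-f(b,z_i)]$, where $G'(a)-G'(b) \leq 0$, while the remaining two differences are bounded by $L^m_R\|a-b\|$ and $L\|a-b\|$ respectively via Lipschitz continuity. Chaining the two bounds yields $\frac{(1-\alpha)\mu}{2}\|a-b\|^2 \leq (L + L^m_R)\|a-b\|$, hence $\|a-b\| \leq \frac{2(L+L^m_R)}{(1-\alpha)\mu}$; letting $\alpha \to 0$ gives $\|a-b\| \leq \frac{2(L+L^m_R)}{\sum_{j=0}^m \nu_j}$. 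Multiplying through the Lipschitz bound on $f$ produces exactly the claimed rate.

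The argument is essentially bookkeeping plus one strong-convexity inequality, so there is no deep obstacle; the one place requiring care is correctly identifying which regularizers survive when a point is deleted. Because the RERM on any dataset of size $m-1$ uses regularizers $r_0,\dots,r_{m-1}$ regardless of which point was removed, the difference $G - G'$ is always exactly $r_m + f(\cdot,z_i)$, and the relevant strong-convexity constant is $\sum_{j=0}^m \nu_j$ (that of the full size-$m$ regularizer), which is precisely the denominator of the target rate. Keeping this indexing straight, and using only convexity of $f$ so that all curvature is supplied by $\sum_{j=0}^m r_j$, is the crux of the bound.
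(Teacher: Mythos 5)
Your proof is correct and follows essentially the same route as the paper's: reduce to bounding $\|\mathbf{A}(S^{\backslash i})-\mathbf{A}(S)\|$ via Lipschitz continuity of $f$, lower-bound the objective gap using the strong convexity $\sum_{j=0}^m \nu_j$ of the aggregate regularizer together with optimality of $\mathbf{A}(S)$, upper-bound it using optimality of $\mathbf{A}(S^{\backslash i})$ on the leave-one-out objective plus Lipschitz bounds on $f(\cdot,z_i)$ and $r_m$, and let $\alpha \to 0$. Your explicit $G$/$G'$ decomposition is merely a cleaner notational packaging of the identical argument, including the correct observation that the size-$(m-1)$ problem uses regularizers $r_0,\dots,r_{m-1}$ so that only $r_m + f(\cdot,z_i)$ survives in the difference.
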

\begin{proof}
By Lipschitz continuity we have $|f(A(S^{\backslash i}),z_i) - f(A(S),z_i)| \leq L ||A(S^{\backslash i}) - A(S)||$. We can use strong convexity of the regularizers to bound $||A(S^{\backslash i}) - A(S)||$: For all $\alpha \in (0,1)$ we have:
\begin{displaymath}
\begin{array}{rl}
\multicolumn{2}{l}{\sum_{j=1}^m \alpha f(A(S^{\backslash i}),z_j) + (1-\alpha) f(A(S),z_j) + \sum_{j=0}^m \alpha r_j(A(S^{\backslash i})) + (1-\alpha) r_j(A(S^{\backslash i})) }\\
\geq & \sum_{j=1}^m f( \alpha A(S^{\backslash i}) + (1-\alpha) A(S), z_j ) + \sum_{j=0}^m r( \alpha A(S^{\backslash i}) + (1-\alpha) A(S)) + \frac{\alpha (1-\alpha) \sum_{i=0}^m \nu_j}{2} ||A(S^{\backslash i}) - A(S)||^2\\
\geq & \sum_{j=1}^m f( A(S), z_j ) + \sum_{j=0}^m r_j(A(S)) + \frac{\alpha (1-\alpha) \sum_{j=0}^m \nu_j}{2} ||A(S^{\backslash i}) - A(S)||^2\\
\end{array}
\end{displaymath}
where the last inequality follows from the fact that $A(S)$ minimizes $\sum_{j=1}^m f(h, z_j ) + \sum_{j=0}^m r_j(h)$. So we obtain for all $\alpha \in (0,1)$: $||A(S^{\backslash i}) - A(S)||^2 \leq \frac{2}{\sum_{j=0}^m \nu_j (1-\alpha)} [ \sum_{j=1}^m [f(A(S^{\backslash i}),z_j) - f(A(S),z_j)] + \sum_{j=0}^m [r_j(A(S^{\backslash i})) - r_j(A(S))] ]$.

Since $A(S^{\backslash i})$ minimizes $\sum_{j=1|j\neq i}^m [f(h,z_j)] + \sum_{j=0}^{m-1} r_j(h)$, then $\sum_{j=1| j \neq i}^m f(A(S),z_j) + \sum_{j=0}^{m-1} r_j(A(S)) \geq \sum_{j=1 | j \neq i}^m [f(A(S^{\backslash i}),z_j) + \sum_{j=0}^{m-1} r_j(A(S^{\backslash i}))$ so:
\begin{displaymath}
\begin{array}{rl}
||A(S^{\backslash i}) - A(S)||^2 
\leq & \frac{2}{\sum_{j=0}^m \nu_j (1-\alpha)} [ \sum_{j=1}^m [f(A(S^{\backslash i}),z_j) - f(A(S),z_j)] + \sum_{j=0}^m [r_j(A(S^{\backslash i})) - r_j(A(S))] ]\\
\leq & \frac{2}{\sum_{j=0}^m \nu_j (1-\alpha)} [f(A(S^{\backslash i}),z_i) - f(A(S),z_i) + r_m(A(S^{\backslash i})) - r_m(A(S))] \\
\leq & \frac{2}{\sum_{j=0}^m \nu_j (1-\alpha)} [L + L^m_R] ||A(S^{\backslash i}) - A(S)||\\
\end{array}
\end{displaymath}
Hence we conclude $||A(S^{\backslash i}) - A(S)|| \leq \frac{2}{\sum_{j=0}^m \nu_j (1-\alpha)} [L + L^m_R]$. Since this holds for all $\alpha \in (0,1)$ then we conclude $||A(S^{\backslash i}) - A(S)|| \leq \frac{2}{\sum_{j=0}^m \nu_j} [L + L^m_R]$. This proves the lemma.
\end{proof}

We also prove an alternate result for the case where the regularizers $r_i$ are strongly convex but not necessarily Lipschitz continuous:
\begin{lemma} \label{lemStrConvReg2}
For any RERM $\textbf{A}$: If $\mathcal{H}$ is a convex set, and for some norm $||\cdot||$ on $\mathcal{H}$ we have that at all $z \in Z$, $f(\cdot{},z)$ is convex and $L$-Lipschitz continuous in $||\cdot||$, and for all $i \geq 0$, $r_i$ is $\nu_i$-strongly convex in $||\cdot||$ and $\sup_{h,h' \in \mathcal{H}} |r_i(h)-r_i(h')| \leq \rho_i$, then $\textbf{A}$ is uniform-LOO stable at rate $\epsilon_{\textrm{loo-stable}}(m) \leq \frac{2L^2}{\sum_{j=0}^m \nu_j} + L \sqrt{\frac{2 \rho_m}{\sum_{j=0}^m \nu_j}}$.
\end{lemma}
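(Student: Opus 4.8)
The plan is to follow the two-step template of Lemma \ref{lemStrConvReg}: first bound the loss gap by the Lipschitz constant times the displacement $\|A(S^{\backslash i}) - A(S)\|$, then control that displacement using the strong convexity contributed by the regularizers. The one structural change is that $r_m$ is no longer Lipschitz, so the step where Lemma \ref{lemStrConvReg} divides through by $\|A(S^{\backslash i}) - A(S)\|$ will not go through; instead I will obtain a quadratic inequality in the displacement and solve it explicitly.

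Writing $w = A(S)$ and $w' = A(S^{\backslash i})$, I would first note that Lipschitz continuity of $f$ gives $|f(w',z_i) - f(w,z_i)| \leq L\|w'-w\|$, so it suffices to bound $\|w'-w\|$. Letting $G$ be the full regularized objective defining $w$ and $\nu = \sum_{j=0}^m \nu_j$ its total strong-convexity modulus (the losses contribute nothing, being merely convex), I would then reproduce the segment argument of the earlier proof: combining strong convexity along $\alpha w' + (1-\alpha)w$ with the optimality of $w$ and sending $\alpha \to 0$ yields $\frac{\nu}{2}\|w'-w\|^2 \leq G(w') - G(w)$.

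Next I would expand $G(w') - G(w)$ and peel off the held-out loss term $f(\cdot,z_i)$ together with the top regularizer $r_m$; what remains is exactly $G'(w') - G'(w)$, where $G'$ is the objective minimized by $w' = A(S^{\backslash i})$, and this block is $\leq 0$ by optimality of $w'$. The held-out loss term is at most $L\|w'-w\|$ by Lipschitz continuity, and here is the single place where non-Lipschitzness bites: rather than a term linear in $\|w'-w\|$, I can only bound $r_m(w') - r_m(w) \leq \rho_m$ via the oscillation bound $\sup_{h,h'}|r_m(h)-r_m(h')| \leq \rho_m$. Collecting these yields $\frac{\nu}{2}\|w'-w\|^2 \leq L\|w'-w\| + \rho_m$.

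The main obstacle is this final inequality, which is genuinely quadratic in $d := \|w'-w\|$ and does not collapse by cancellation as it did in Lemma \ref{lemStrConvReg}. I would solve $\frac{\nu}{2}d^2 - Ld - \rho_m \leq 0$ by the quadratic formula to get $d \leq (L + \sqrt{L^2 + 2\nu\rho_m})/\nu$, then apply subadditivity $\sqrt{a+b} \leq \sqrt{a}+\sqrt{b}$ to split this into $\frac{2L}{\nu} + \sqrt{2\rho_m/\nu}$. Multiplying by $L$ from the Lipschitz step would then give the claimed rate $\frac{2L^2}{\sum_{j=0}^m \nu_j} + L\sqrt{2\rho_m/\sum_{j=0}^m \nu_j}$.
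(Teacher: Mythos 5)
Your proposal is correct and follows essentially the same route as the paper: the paper likewise reuses the segment/strong-convexity argument of Lemma \ref{lemStrConvReg}, replaces the Lipschitz bound on $r_m$ by the oscillation bound $r_m(A(S^{\backslash i})) - r_m(A(S)) \leq \rho_m$, arrives at the identical quadratic inequality $\|A(S^{\backslash i})-A(S)\|^2 \leq \frac{2L}{\sum_{j=0}^m \nu_j}\|A(S^{\backslash i})-A(S)\| + \frac{2\rho_m}{\sum_{j=0}^m \nu_j}$, bounds the displacement by the larger root, and splits the square root via $\sqrt{1+x} \leq 1+\sqrt{x}$. The only cosmetic difference is that you dispose of the $\alpha$-parameter by sending $\alpha \to 0$ before forming the quadratic, whereas the paper carries the $(1-\alpha)$ factor along and takes the limit at the end.
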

\begin{proof}
Following a similar proof to the previous proof and using the fact that $[r_m(A(S^{\backslash i})) - r_m(A(S))] \leq \rho_m$ we obtain that: $||A(S^{\backslash i}) - A(S)||^2 \leq \frac{2 L}{\sum_{j=0}^m \nu_j} ||A(S^{\backslash i}) - A(S)|| + \frac{2 \rho_m}{\sum_{j=0}^m \nu_j}$. This is a quadratic inequality of the form $Ax^2 + B x + C \leq 0$. Since here $A=1 > 0$, then this implies $x$ is less than or equal to the largest root of $Ax^2 + B x + C$. We know that the roots are $x = \frac{-B \pm \sqrt{B^2 - 4AC}}{2A}$. Here $A=1$, $B=-\frac{2 L}{\sum_{j=0}^m \nu_j}$ and $C=-\frac{2 \rho_m}{\sum_{j=0}^m \nu_j}$. So the largest root is: $x = \frac{L}{\sum_{j=0}^m \nu_j}[ 1 + \sqrt{1 + \frac{2 \rho_m \sum_{j=0}^m \nu_j}{L^2}} ]$. We conclude $||A(S^{\backslash i}) - A(S)|| \leq \frac{L}{\sum_{j=0}^m \nu_j}[ 1 + \sqrt{1 + \frac{2 \rho_m \sum_{j=0}^m \nu_j}{L^2}} ]$. Since $\sqrt{1 + \frac{2 \rho_m \sum_{j=0}^m \nu_j}{L^2}} \leq 1 + \sqrt{\frac{2 \rho_m \sum_{j=0}^m \nu_j}{L^2}}$ we obtain $||A(S^{\backslash i}) - A(S)|| \leq \frac{2L}{\sum_{j=0}^m \nu_j} + \sqrt{\frac{2 \rho_m}{\sum_{j=0}^m \nu_j}}$. Combining with the fact that $|f(A(S^{\backslash i}),z_i) - f(A(S),z_i)| \leq L ||A(S^{\backslash i}) - A(S)||$ proves the lemma.
\end{proof}

\section{Mirror Descent and Gradient-Based Methods \label{secGradient}}
So far we have thought of using an underlying batch algorithm to pick the sequence of hypotheses. A popular class of online methods are gradient based methods, such as gradient descent and Newton's type methods \citep{Zinkevich03,Agarwal06}. Such approaches can all be interpreted as Mirror Descent methods, and it is known that Mirror Descent algorithms can be thought as some form of FTRL \citep{McMahan11}. The difference is that they follow the regularized leader on a linear/quadratic approximation to the loss function (linear/quadratic lower bound in the convex/strongly convex case) at each data point $z$, and the regularizers $r_i$ may regularize about the previously chosen $h_i$ (after observing the first $i-1$ datapoints) rather than some fixed hypothesis over the iterations (such as $h_1$). These algorithms are typically not symmetric, as the approximation points to the loss function (and potentially the regularizers) depend on the order of the data points in the dataset. 

Nevertheless, we can still use our previous analysis to bound the regret for these methods in terms of online stability and AERM properties. We will refer to this broad class of methods as Regularized Surrogate Loss Minimizer (RSLM):

\begin{definition}
An algorithm $\textbf{A}$ is a Regularized Surrogate Loss Minimizer (RSLM) if for all $m$ and any dataset $S$ of $m$ data points:
\begin{equation}
r_0(\textbf{A}(S)) + \sum_{i=1}^m [\ell_i(\textbf{A}(S),z_i) + r_i(\textbf{A}(S))] = \min_{h \in \mathcal{H}} r_0(h) + \sum_{i=1}^m [\ell_i(h,z_i) + r_i(h)]
\end{equation}
for $\{\ell_i\}_{i=1}^m$ the surrogate loss functionals chosen such that $f(\textbf{A}(S_{i-1}),z_i) - f(h,z_i) \leq \ell_i(\textbf{A}(S_{i-1}),z_i) - \ell_i(h,z_i)$ for all $h$ (i.e. they upper bound the regret), $\{r_i\}_{i=0}^\infty$ the regularizers functionals such that $\sup_{h,h' \in \mathcal{H}} |r_i(h) - r_i(h')| \leq \rho_i$ and $\{\rho_i\}_{i=0}^\infty$ is $o(1)$.
\end{definition}
Note that a RERM is a special case of a RSLM where $\ell_i(h,z_i) = f(h,z_i)$.

For the broader class of RSLM, the regret is bounded by:
\begin{lemma} \label{lemRSLM}
For any RSLM $\textbf{A}$:
\begin{equation}
\begin{array}{rcl}
R_m(\textbf{A}) & \leq & \sum_{i=1}^m [\ell_i(\textbf{A}(S_{i-1}),z_i) - \ell_i(\textbf{A}(S_{i}),z_i)] + \sum_{i=1}^m \ell_i(\textbf{A}(S_{m}),z_i) - \min_{h \in \mathcal{H}} \sum_{i=1}^m \ell_i(h,z_i)\\
& & + \sum_{i=1}^{m-1} \sum_{j=1}^i [ \ell_j(\textbf{A}(S_{i}),z_j) - \ell_j(\textbf{A}(S_{i+1}),z_j) ]\\
\end{array}
\end{equation}
\end{lemma}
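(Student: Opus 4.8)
The plan is to first use the defining surrogate property of an RSLM to replace the true regret $R_m(\textbf{A})$ (measured against $f$) by an analogous ``surrogate regret'' measured against the time-varying losses $\ell_i$, and then to reuse the purely algebraic decomposition already established in Lemma~\ref{lemAll}, now applied to the sequence $\{\ell_i\}$ rather than to $f$.

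First, let $h^*$ be a hypothesis achieving $\min_{h \in \mathcal{H}} \sum_{i=1}^m f(h,z_i)$, so that $R_m(\textbf{A}) = \sum_{i=1}^m [f(\textbf{A}(S_{i-1}),z_i) - f(h^*,z_i)]$. Applying the surrogate upper-bound property term by term with the choice $h = h^*$ gives $f(\textbf{A}(S_{i-1}),z_i) - f(h^*,z_i) \le \ell_i(\textbf{A}(S_{i-1}),z_i) - \ell_i(h^*,z_i)$ for each $i$, and summing yields $R_m(\textbf{A}) \le \sum_{i=1}^m \ell_i(\textbf{A}(S_{i-1}),z_i) - \sum_{i=1}^m \ell_i(h^*,z_i)$. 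Since $\sum_{i=1}^m \ell_i(h^*,z_i) \ge \min_{h \in \mathcal{H}} \sum_{i=1}^m \ell_i(h,z_i)$, enlarging the right-hand side gives $R_m(\textbf{A}) \le \sum_{i=1}^m \ell_i(\textbf{A}(S_{i-1}),z_i) - \min_{h \in \mathcal{H}} \sum_{i=1}^m \ell_i(h,z_i)$. This single inequality is the only place the surrogate property enters, and it is the reason the lemma is an inequality rather than the equality of Lemma~\ref{lemAll}.

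It then remains to expand the surrogate regret exactly into the three claimed groups of terms. The key observation is that the manipulation in the proof of Lemma~\ref{lemAll} never uses any property of $f$ beyond its being a fixed function of a hypothesis and a data point; it is a telescoping identity valid for an arbitrary collection of functions, provided the loss index is kept tied to the corresponding data-point index. Concretely, I would establish $\sum_{i=1}^m \ell_i(\textbf{A}(S_{m}),z_i) = \sum_{i=1}^m \ell_i(\textbf{A}(S_{i}),z_i) + \sum_{i=1}^{m-1}\sum_{j=1}^i [\ell_j(\textbf{A}(S_{i+1}),z_j) - \ell_j(\textbf{A}(S_{i}),z_j)]$ by the same grow-the-dataset-one-point-at-a-time telescoping, checking that reversing the order of summation collapses the inner sum over $i$ to $\ell_j(\textbf{A}(S_m),z_j) - \ell_j(\textbf{A}(S_j),z_j)$. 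Substituting this identity and writing $\ell_i(\textbf{A}(S_{i-1}),z_i) = [\ell_i(\textbf{A}(S_{i-1}),z_i) - \ell_i(\textbf{A}(S_i),z_i)] + \ell_i(\textbf{A}(S_i),z_i)$ regroups the expression precisely into the stability term, the surrogate-minimization term, and the double summation appearing in the statement.

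The only genuine subtlety, and hence the step I would verify most carefully, is the bookkeeping of loss indices: unlike in Lemma~\ref{lemAll}, the loss now changes with time, so each $\ell$ must carry the index of the data point it is evaluated on (for instance $\ell_j$ paired with $z_j$ inside the double sum, and $\ell_i$ paired with $z_i$ in the first two sums). As long as this pairing is maintained, the telescoping is formally identical to the $f$-case, and no property of the surrogate losses other than the pointwise upper-bound condition is needed. This completes the plan.
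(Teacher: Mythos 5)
Your proposal is correct and follows exactly the paper's own route: first using the surrogate upper-bound property (instantiated at the $f$-minimizer $h^*$) to bound $R_m(\textbf{A})$ by the surrogate regret $\sum_{i=1}^m \ell_i(\textbf{A}(S_{i-1}),z_i) - \min_{h \in \mathcal{H}} \sum_{i=1}^m \ell_i(h,z_i)$, then applying the telescoping decomposition of Lemma~\ref{lemAll} with $\ell_i$ in place of $f$. Your careful verification that the telescoping identity survives the time-varying loss indices is exactly the detail the paper's one-line proof leaves implicit, and your index bookkeeping checks out.
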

\begin{proof}
By properties of the functions $\ell_i$ we have that:
$R_m(\textbf{A}) \leq \sum_{i=1}^m \ell_i(\textbf{A}(S_{i-1}),z_i) - \min_{h \in \mathcal{H}} \sum_{i=1}^m \ell_i(h,z_i)$.
Using the same manipulations as in lemma \ref{lemAll} proves the lemma.
\end{proof}

A RSLM is a RERM in the loss $\{ \ell_i \}_{i=1}^m$ instead of $f$. Hence it follows that if such RSLM is online stable (in the loss $\{ \ell_i \}_{i=1}^m$, i.e. $|\ell_m(\textbf{A}(S_{m-1}),z_m)) - \ell_m(\textbf{A}(S_{m}),z_m))| \rightarrow 0$ as $m \rightarrow \infty$) it must have no regret:
\begin{theorem} \label{thmRSLM}
If there exists a RSLM that is online stable in the surrogate loss $\{\ell_i\}_{i=1}^m$, then the problem is online learnable. In particular, it has no regret at rate:
\begin{equation}
\epsilon_{\textrm{regret}}(m) \leq \frac{1}{m} \sum_{i=1}^m \epsilon_{\textrm{on-stable}}(i) + \frac{2}{m} \sum_{i=0}^{m-1} \rho_i + \frac{\rho_m}{m}
\end{equation}
\end{theorem}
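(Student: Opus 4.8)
The plan is to reduce Theorem~\ref{thmRSLM} to the RERM analysis already carried out for Theorem~\ref{thmRAERM}, exploiting the observation (noted just before the statement) that a RSLM is exactly a RERM in which the loss $f$ is replaced by the sequence of surrogate losses $\{\ell_i\}$. The starting point is Lemma~\ref{lemRSLM}, which uses the defining property of the surrogates --- that $\ell_i$ upper bounds the instantaneous regret, so $f(\textbf{A}(S_{i-1}),z_i) - f(h,z_i) \leq \ell_i(\textbf{A}(S_{i-1}),z_i) - \ell_i(h,z_i)$ --- to bound the true regret $R_m(\textbf{A})$ (measured in $f$) by a three-term decomposition stated entirely in the surrogate losses. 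It then remains to bound each of these three terms, mirroring the proof of Theorem~\ref{thmRAERM} but with every $f$ replaced by the matching $\ell_j$.

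First I would bound the one-step stability term $\sum_{i=1}^m [\ell_i(\textbf{A}(S_{i-1}),z_i) - \ell_i(\textbf{A}(S_{i}),z_i)]$. Since $S_{i-1} = S_i^{\backslash i}$, online stability in the surrogate loss gives $\ell_i(\textbf{A}(S_{i-1}),z_i) - \ell_i(\textbf{A}(S_i),z_i) \leq \epsilon_{\textrm{on-stable}}(i)$, so this term is at most $\sum_{i=1}^m \epsilon_{\textrm{on-stable}}(i)$. Next, for the fitting term $\sum_{i=1}^m \ell_i(\textbf{A}(S_m),z_i) - \min_h \sum_{i=1}^m \ell_i(h,z_i)$, I would invoke the RERM optimality of $\textbf{A}(S_m)$ on the regularized surrogate objective: for any competitor $h$, moving from $\textbf{A}(S_m)$ to $h$ can only decrease the regularized surrogate cost, so the surrogate-loss gap is controlled by the total regularizer range $\sum_{i=0}^m \rho_i$. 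This is precisely the always-AERM property at rate $\frac{1}{m}\sum_{i=0}^m \rho_i$, but expressed in the surrogate loss rather than in $f$.

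The third, double-summation term $\sum_{i=1}^{m-1}\sum_{j=1}^i [\ell_j(\textbf{A}(S_i),z_j) - \ell_j(\textbf{A}(S_{i+1}),z_j)]$ is handled by reusing the proof of Lemma~\ref{lemRAERM} verbatim with $\ell_j$ in place of $f$: because the index of $\ell_j$ is aligned with the data index $z_j$, the optimality of each $\textbf{A}(S_i)$ on the objective $r_0 + \sum_{j=1}^i [\ell_j(\cdot,z_j)+r_j]$ lets one replace the surrogate-loss differences by regularizer differences, which telescope to at most $\sum_{i=0}^{m-1} \rho_i$. Summing the three bounds gives $R_m(\textbf{A}) \leq \sum_{i=1}^m \epsilon_{\textrm{on-stable}}(i) + 2\sum_{i=0}^{m-1}\rho_i + \rho_m$, and dividing by $m$ yields exactly the claimed rate; since $\epsilon_{\textrm{on-stable}}$ and each $\rho_i$ are $o(1)$, the running (Ces\`aro) averages vanish and the algorithm has no regret.

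The proof is essentially mechanical once Lemma~\ref{lemRSLM} is in hand, so I do not expect a serious obstacle. The one genuinely non-routine step --- and the reason surrogate methods work at all --- is the passage from the true regret in $f$ to the surrogate regret in $\{\ell_i\}$, which relies entirely on the upper-bound property of the surrogates; this has already been isolated in Lemma~\ref{lemRSLM}, after which the three-term bookkeeping is identical to the RERM case and the only care needed is to keep the $\ell_j$-indexing aligned with the data points throughout.
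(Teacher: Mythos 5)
Your proof is correct and takes essentially the same route as the paper, whose entire proof is to apply Corollary~\ref{corLOOAERM} and Lemma~\ref{lemRAERM} with $f$ replaced by $\{\ell_i\}$ to the decomposition of Lemma~\ref{lemRSLM} --- exactly your three-term bookkeeping, including the correct accounting of $\sum_{i=0}^m \rho_i$ from the surrogate always-AERM property plus $\sum_{i=0}^{m-1}\rho_i$ from the telescoped double sum. The only difference is that you spell out the substitution explicitly rather than citing the earlier results, which changes nothing substantive.
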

\begin{proof}
Follows from applying corollary \ref{corLOOAERM} and lemma \ref{lemRAERM} (but replacing $f$ by $\{\ell_i\}$) to the previous lemma \ref{lemRSLM}.
\end{proof}

\section{Weighted Majority, Hedge and Randomized Algorithms \label{secWM}}

We have so far restricted our attention to deterministic algorithms, which upon observing a dataset $S$ return a fixed hypothesis $h \in \mathcal{H}$. An important class of methods for online learning are randomized algorithms such as Weighted Majority, and its generalization Hedge, which instead return a distribution over hypotheses in $\mathcal{H}$ at each iteration. These randomized algorithms are important in online learning as it is known that some problems are not online learnable with deterministic algorithms but are online learnable with randomized algorithms (assuming the adversary can only be aware of the distribution over hypotheses and not the particular hypothesis that will be sampled from this distribution when choosing the data point $z$). For instance, general problems with a finite set of hypotheses fall in this category.

In this section we show that Weighted Majority, Hedge and similar variants, can be interpreted as Randomized uniform-LOO stable RERM. We provide an analysis of the stability, AERM and no-regret rates of such algorithms based on the previous results derived in this paper. These results will be useful to determine the existence of (potentially randomized) uniform-LOO stable RERM for a large class of learning problems. Before we introduce this analysis, we first define formally what we mean by a Randomized RERM and how notions of stability and no-regret extend to randomized algorithms.

\subsection{Randomized Algorithms}

\begin{definition}
Let $\Theta$ be a set such that for any $\theta \in \Theta$, $P_\theta$ is a probability distribution over the class of hypothesis $\mathcal{H}$, and for any $h \in \mathcal{H}$, and $\epsilon > 0$ there exists a $\theta \in \Theta$ such that $\mathbb{E}_{h' \sim P_\theta}[f(h',z)] - f(h,z) \leq \epsilon$ for all $z \in \mathcal{Z}$. Let $P_{\theta_S} = \mathbf{A}(S)$ denote the distribution picked by algorithm $\mathbf{A}$ on dataset $S$. An algorithm $\mathbf{A}$ is a \textbf{Randomized RERM} if for all $m$ and any dataset $S$:
\begin{equation}
r_0(\theta_S) + \sum_{i=1}^m [ \mathbb{E}_{h \sim P_{\theta_S}}[f(h,z_i)] + r_i(\theta_S) ] = \min_{\theta \in \Theta} r_0(\theta) + \sum_{i=1}^m [\mathbb{E}_{h \sim P_{\theta}}[f(h,z_i)] + r_i(\theta)]
\end{equation}
for $r_i : \Theta \rightarrow \mathbb{R}$ the regularizer functionals, which measure the complexity of a chosen $\theta$, that we assume satisfy $\sup_{\theta,\theta' \in \Theta}|r_i(\theta) - r_i(\theta')| \leq \rho_i$ and $\{\rho_m\}_{m=0}^\infty$ is $o(1)$.
\end{definition}
The set $\Theta$ might represent a set of parameters parametrizing a family of distributions (e.g. $\Theta$ a set of mean-variance tuples such that $P_\theta$ is gaussian with those parameters), or in other cases be a set of distribution itself (e.g. when $\mathcal{H}$ is finite, $\Theta$ might be the set of all discrete distributions over $\mathcal{H}$), in which case $P_\theta = \theta$. The condition that there exists a $\theta \in \Theta$ such that $\mathbb{E}_{h' \sim P_\theta}[f(h',z)] - f(h,z) < \epsilon$ for all $z \in \mathcal{Z}$ is to ensure the algorithm is an AERM, i.e. that it can pick a $\theta$ that has average expected loss no greater than the best fixed $h \in \mathcal{H}$ in the limit as $m \rightarrow \infty$. A deterministic RERM is a special case of a Randomized RERM where the set $\Theta = \mathcal{H}$ and $P_\theta$ is just the probability distribution with probability 1 for the chosen hypothesis $\theta$. 

When using a randomized algorithm, the algorithm incurs loss on a hypothesis $h$ sampled from the chosen $P_\theta$, and we assume the adversary may only be aware of $P_\theta$ in advance (not the particular sampled $h$) when choosing $z$. The previous definitions of stability, AERM and no-regret extends to randomized algorithms by considering the loss $f(\mathbf{A}(S),z) = \mathbb{E}_{h \sim \mathbf{A}(S)}[f(h,z)]$. Thus a no-regret randomized algorithm is an algorithm such that its expected average regret under the sequence of chosen distributions goes to 0 as $m$ goes to $\infty$. By our assumption that the instantaneous regret is bounded, this is also equivalent to saying that its average regret (under the sampled hypotheses) goes to 0 with probability 1 as $m$ goes to $\infty$ (e.g. using an Hoeffding bound). Additionally, a randomized online stable algorithm implies that the change in expected loss on the last data point when it is held out goes to 0 as $m$ goes to $\infty$ ($|\mathbb{E}_{h \sim \mathbf{A}(S^{\backslash m})}[f(h,z_m)] - \mathbb{E}_{h \sim \mathbf{A}(S)}[f(h,z_m)]| \rightarrow 0$). 

\subsection{Hedge and Weighted Majority}

An important randomized no-regret online learning algorithm when $\mathcal{H}$ is finite is the Hedge algorithm \citep{Hedge}. Hedge is a generalization to arbitrary loss of the Weighted Majority algorithm that was introduced for the classification setting \citep{Littlestone94}. Let $\theta_i$ denote the probability of hypothesis $h_i$, then at any iteration $t$, Hedge/Weighted Majority plays $\theta_i \propto \exp(-\eta \sum_{j=1}^{t-1} f(h_i,z_j))$ for some positive constant $\eta$. When the number of rounds $m$ is known in advance, $\eta$ is typically chosen as $O(B\sqrt{\frac{m}{\log(|\mathcal{H}|)}})$, for $B$ the maximum instantaneous regret. We will consider here a slight generalization of Hedge that can be applied for cases where the number of rounds is not known in advance. In this case at iteration $t$: $\theta_i \propto \exp(-\eta_t \sum_{j=1}^{t-1} f(h_i,z_j))$ for some sequence of positive constants $\{\eta_t\}_{t=0}^\infty$. We show here that Hedge (and Weighted Majority) is in fact a Randomized uniform-LOO stable RERM, where $\Theta$ is the set of all discrete distributions over the finite set of experts, and the regularizer corresponds to a KL divergence between the chosen distribution and the uniform distribution over experts:
\begin{theorem} \label{thmWM}
For finite set of $d$ experts with instantaneous regret bounded by $B$, the Hedge (and Weighted Majority) algorithm corresponds to the following Randomized uniform-LOO stable RERM. Let $\Theta$ be the set of distributions over the finite set of $d$ experts, and $U$ denote the uniform distribution, then at each iteration $t$, Hedge (and Weighted Majority) picks the distribution $\theta^* \in \Theta$ that satisfies:
\begin{displaymath}
\theta^* = \argmin_{\theta \in \Theta} \sum_{i=1}^{t-1} \mathbb{E}_{h \sim \theta}[f(h,z_i)] + \sum_{i=0}^{t-1} \lambda_i KL(\theta||U)
\end{displaymath}
i.e. it uses $r_t = \lambda_t r$ for $r$ a KL regularizer with respect to the uniform distribution. Choosing the regularization constants $\lambda_t = B \sqrt{\frac{1}{8 \log(d) \max(1,t)}}$ for all $t \geq 0$ makes Hedge (and Weighted Majority) uniform-LOO stable at rate $\epsilon_{\textrm{loo-stable}}(m) \leq B\sqrt{2 \log(d)}[\frac{1}{2\sqrt{m}-1} + \frac{1}{2\sqrt{m+1}}]$, always AERM at rate $\epsilon_{\textrm{erm}}(m) \leq B\sqrt{\frac{\log(d)}{2m}} (1 + \frac{1}{2\sqrt{m}})$ and no-regret at rate $\epsilon_{\textrm{regret}}(m) \leq B\sqrt{2 \log(d)} [\frac{3}{\sqrt{m}} + \frac{\log(m)}{2m} + \frac{1+2\ln(2)}{2m}]$.
\end{theorem}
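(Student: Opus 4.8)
The plan is to verify three separate bounds — a uniform-LOO stability rate, an always-AERM rate, and then assemble the no-regret rate via Theorem~\ref{thmRAERM}. The first order of business is to confirm the claimed variational characterization: that the Hedge weights $\theta_i \propto \exp(-\eta_t \sum_{j=1}^{t-1} f(h_i,z_j))$ are exactly the minimizer of $\sum_{i=1}^{t-1} \mathbb{E}_{h\sim\theta}[f(h,z_i)] + (\sum_{i=0}^{t-1}\lambda_i) KL(\theta\|U)$. This is a standard computation: writing the Lagrangian for the simplex constraint and differentiating, the optimal $\theta$ has the exponential-weights form with inverse temperature $\eta_t = 1/\sum_{i=0}^{t-1}\lambda_i$. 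So I would first pin down the relation $\eta_t = (\sum_{i=0}^{t-1}\lambda_i)^{-1}$ and note that with $\lambda_i = B\sqrt{1/(8\log(d)\max(1,i))}$ the partial sums $\sum_{i=0}^{t-1}\lambda_i$ grow like $\Theta(\sqrt{t})$, which is what ultimately yields the $1/\sqrt{m}$ regret. The sum $\sum_{i=0}^{t-1}\lambda_i$ should be bounded above and below by comparing to $\int \sqrt{1/x}\,dx = 2\sqrt{x}$; the explicit constants in the final rates ($\frac{1}{2\sqrt{m}-1}$, etc.) will come from these integral comparisons.

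The second step is the stability bound, and this is where I would invoke the machinery already built. The KL divergence $KL(\cdot\|U)$ is $1$-strongly convex with respect to the $\ell_1$ norm on the simplex (Pinsker-type inequality), so each regularizer $r_t = \lambda_t r$ is $\lambda_t$-strongly convex in $\|\cdot\|_1$, giving $\nu_t = \lambda_t$ and hence $\sum_{j=0}^m \nu_j = \sum_{j=0}^m \lambda_j$. The expected-loss functional $\mathbb{E}_{h\sim\theta}[f(h,z)]$ is linear in $\theta$, hence convex, and $L$-Lipschitz in $\|\cdot\|_1$ with $L = B$ since the per-expert losses differ by at most $B$ in instantaneous regret. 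This places us squarely in the hypotheses of Lemma~\ref{lemStrConvReg2} (strongly convex but not necessarily Lipschitz regularizer, because $KL$ blows up at the simplex boundary), with $\rho_m = \lambda_m \rho$ where $\rho = \sup KL \le \log d$. Substituting $L=B$, $\sum_j\nu_j = \sum_j\lambda_j = \Theta(\sqrt m)$, and $\rho_m = \lambda_m\log d = \Theta(1/\sqrt m)$ into the rate $\frac{2L^2}{\sum_j\nu_j} + L\sqrt{\frac{2\rho_m}{\sum_j\nu_j}}$ should, after the integral estimates on the partial sums, reproduce the stated $\epsilon_{\textrm{loo-stable}}(m) \le B\sqrt{2\log d}\,[\frac{1}{2\sqrt m - 1} + \frac{1}{2\sqrt{m+1}}]$.

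The third step is the always-AERM rate, which for a RERM is simply $\frac{1}{m}\sum_{i=0}^m\rho_i = \frac{1}{m}\sum_{i=0}^m \lambda_i\log d$; again the $\Theta(\sqrt m)$ growth of the partial sum divided by $m$ gives the $\Theta(1/\sqrt m)$ AERM rate, and the integral comparison supplies the explicit $B\sqrt{\log(d)/(2m)}(1+\frac{1}{2\sqrt m})$. Finally I would feed the stability rate and the regularizer bounds $\rho_i = \lambda_i\log d$ into Theorem~\ref{thmRAERM}, whose regret bound is $\frac{1}{m}\sum_{i=1}^m \epsilon_{\textrm{on-stable}}(i) + \frac{2}{m}\sum_{i=0}^{m-1}\rho_i + \frac{\rho_m}{m}$ (using that for this symmetric algorithm online stability equals uniform-LOO stability). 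The first term contributes a sum of the $1/\sqrt i$-type stability rates, again $\Theta(1/\sqrt m)$ on average; the regularizer terms contribute the $\log(m)/m$ and $1/m$ corrections. \textbf{The main obstacle} I anticipate is purely in the bookkeeping of constants: getting all three integral-comparison estimates for $\sum_{i=0}^{m}\lambda_i$, $\sum_{i=1}^m\epsilon_{\textrm{loo-stable}}(i)$, and the $\log m$ term tight enough to match the precise stated coefficients (the $\frac{3}{\sqrt m}$, $\frac{\log m}{2m}$, $\frac{1+2\ln 2}{2m}$). The conceptual content — Pinsker strong convexity of $KL$, linearity/Lipschitzness of the expected loss, and direct application of Lemmas~\ref{lemStrConvReg2} and Theorem~\ref{thmRAERM} — is routine given the earlier results; the effort is in the careful summation estimates rather than any new idea.
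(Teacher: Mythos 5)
Your overall architecture matches the paper's proof exactly: derive the exponential-weights form from the Lagrangian with $\eta_t = (\sum_{j}\lambda_j)^{-1}$, establish $1$-strong convexity of the KL regularizer in $\|\cdot\|_1$ and linearity of the expected loss, apply Lemma~\ref{lemStrConvReg2} (correctly chosen over Lemma~\ref{lemStrConvReg}, since KL is not Lipschitz at the simplex boundary) with $\rho_i = \lambda_i \log d$, feed the result into Theorem~\ref{thmRAERM}, and handle the partial sums $\sum_{j=0}^t \lambda_j$ by integral comparison. However, there is one concrete gap, and it is not mere constant bookkeeping: your Lipschitz constant $L = B$ is wrong, and the step as you state it would fail. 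The paper exploits the fact that for $\theta, \theta'$ on the simplex the difference $\theta - \theta'$ sums to zero, so one may center the losses by an arbitrary $v \in \mathbb{R}$:
\begin{displaymath}
\bigl|\mathbb{E}_{h \sim \theta}[f(h,z)] - \mathbb{E}_{h \sim \theta'}[f(h,z)]\bigr| = \Bigl|\sum_{i=1}^d (\theta_i - \theta'_i)(f(h_i,z) - v)\Bigr| \leq \sup_{h} |f(h,z) - v| \, \|\theta - \theta'\|_1,
\end{displaymath}
and taking $v$ to be the mid-range value gives $L = \sup_z \inf_v \sup_h |f(h,z)-v| = B/2$, not $B$.

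This factor of two is load-bearing in two ways. First, the theorem's prescribed regularization constant $\lambda_t = B\sqrt{1/(8\log(d)\max(1,t))}$ is exactly the optimized choice $c = L\sqrt{1/(2\log d)}$ with $L = B/2$; with your $L = B$ the stated $\lambda_t$ is no longer the minimizer of the $1/\sqrt{m}$ coefficient, and all three stated rates (stability, AERM, regret) come out a factor of $2$ too large — e.g. you would get $\epsilon_{\textrm{regret}}(m) \leq B\sqrt{2\log d}\,[\frac{6}{\sqrt m} + \frac{\log m}{m} + \frac{1+2\ln 2}{m}]$ rather than the claimed $B\sqrt{2\log d}\,[\frac{3}{\sqrt m} + \frac{\log m}{2m} + \frac{1+2\ln 2}{2m}]$. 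Second, and more fundamentally, the paper only assumes bounded instantaneous regret, so the loss $f$ itself may be unbounded; your justification (``per-expert losses differ by at most $B$'') implicitly needs the centering identity above to convert a bound on the \emph{range} of $f(\cdot,z)$ into a Lipschitz constant at all, since the naive bound $\sup_h |f(h,z)|\,\|\theta-\theta'\|_1$ need not be finite. Everything else in your plan — the Lagrangian derivation, the bounds $2c\sqrt t - c \leq \sum_{j=0}^t \lambda_j \leq 2c\sqrt t + c$, the estimate $\lambda_i / \sum_{j=0}^i \lambda_j \leq 1/(i+1)$, the AERM rate $\frac{1}{m}\sum_{i=0}^m \rho_i$, and the assembly via Theorem~\ref{thmRAERM} using symmetry to identify online stability with uniform-LOO stability — coincides with the paper's argument and goes through once $L$ is corrected.
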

\begin{proof}
Consider the above Randomized RERM algorithm. Then we have $0 \leq \lambda_i KL(\theta||U) \leq \lambda_i \log(d)$, for all $i$ and $\theta \in \Theta$. So $\Theta$ and $\{r_i\}_{i=0}^\infty$ are well defined according to our assumptions in the definition of a Randomized RERM as long as $\{\lambda_i\}_{i=0}^\infty$ is $o(1)$. Let $h_i$ denote the $i^{th}$ expert and $\theta_i$ denote the probability assigned to $h_i$ for a chosen $\theta \in \Theta$. At any iteration $t+1$, when the algorithm has observed $t$ data points so far, the randomized RERM algorithm solves an optimization problem of the form:
\begin{displaymath}
\begin{array}{ll}
\multicolumn{2}{l}{\argmin_{\theta \in \Theta} \sum_{j=1}^{t} \sum_{i=1}^d \theta_i f(h_i,z_j) + \sum_{j=0}^t \lambda_{j} \sum_{i=1}^d \theta_i \log(d\theta_i)}\\
s.t. & 0 \leq \theta_i \leq 1\\
 & \sum_{i=1}^d \theta_i = 1\\
\end{array}
\end{displaymath}
Using the Lagrangian, we can easily see that the optimal solution to this optimization problem is to choose $\theta_i \propto \exp(-\frac{1}{\sum_{j=0}^t \lambda_{j}} \sum_{j=1}^t f(h_i,z_j))$ for all $i$. This is the same as Hedge for $\eta_t = \frac{1}{\sum_{j=0}^t \lambda_{j}}$. When Hedge is playing for $m$ rounds and uses a fixed $\eta$, this can be achieved with a fixed regularizer $\lambda_0 = \frac{1}{\eta}$ and $\lambda_t = 0$ for all $t \geq 1$. So this establishes that Hedge is equivalent to the above RERM. Now let's consider the case where the number or rounds $m$ is not known in advance and we choose $\lambda_t = c\sqrt{\frac{1}{\max(t,1)}}$ for all $t \geq 0$ and some constant $c$ in the above RERM. This choice leads to $2c \sqrt{t} - c \leq \sum_{j=0}^t \lambda_{j} \leq 2c \sqrt{t} + c$. Note also that because $\lambda_{t} \leq \lambda_j$ for all $j \leq t$ we also have that $(t+1) \lambda_t \leq \sum_{j=0}^t \lambda_{j}$. It is easy to see why the above RERM must be uniform-LOO stable. First the expected loss of the randomized algorithm is linear in $\theta$ (and hence convex) while the $KL$ regularizer is 1-strongly convex in $\theta$ under $||\cdot||_1$ and bounded by $\log(d)$ (so $r_m$ is $\lambda_m$-strongly convex and bounded by $\lambda_m\log(d)$). Additionnally, the expected loss is $L$-Lipschitz continuous in $||\cdot||_1$ on $\theta$, for $L = \sup_{z \in \mathcal{Z}} \inf_{v \in \mathbb{R}} \sup_{h \in \mathcal{H}} |f(h,z)-v|$. This is because for any $z$:
\begin{displaymath}
\begin{array}{ll}
\multicolumn{2}{l}{|\mathbb{E}_{h \sim P_\theta}[f(h,z)] - \mathbb{E}_{h \sim P_{\theta'}}[f(h,z)]|}\\
= & |\sum_{i=1}^d (\theta_i-\theta'_i) (f(h_i,z) - v)| \\
\leq & \sum_{i=1}^d |\theta_i-\theta'_i| |f(h_i,z) - v| \\
\leq & \sup_{h \in \mathcal{H}} |f(h,z) - v| ||\theta-\theta'||_1
\end{array}
\end{displaymath}
for any $v \in \mathbb{R}$. So we conclude that for all $z \in \mathcal{Z}$, $|\mathbb{E}_{h \sim P_\theta}[f(h,z)] - \mathbb{E}_{h \sim P_{\theta'}}[f(h,z)]| \leq L ||\theta-\theta'||_1$. If the loss $f$ have instantaneous regret bounded by $B$, then $L = \frac{B}{2}$. So by our previous result for RERM with convex loss and strongly convex regularizers, we obtain that the algorithm is uniform-LOO stable at rate $\epsilon_{\textrm{loo-stable}}(m) \leq \frac{2L^2}{\sum_{i=0}^m \lambda_i} + L \sqrt{\frac{2 \lambda_m \log(d)}{\sum_{i=0}^m \lambda_i}}$. So that the algorithm has no regret at rate $\epsilon_{\textrm{regret}}(m) \leq \frac{1}{m} \sum_{i=1}^m [\frac{2L^2}{\sum_{j=0}^i \lambda_j} + L \sqrt{\frac{2 \lambda_i \log(d)}{\sum_{j=0}^i \lambda_j}}] + \frac{2 \log(d)}{m} \sum_{j=0}^{m-1} \lambda_j  + \frac{\log(d) \lambda_m}{m}$. Setting $\lambda_i = c \sqrt{\frac{1}{\max(1,i)}}$ leads to:
\begin{displaymath}
\begin{array}{ll}
\multicolumn{2}{l}{\epsilon_{\textrm{regret}}(m)}\\
\leq & \frac{1}{m} \sum_{i=1}^m [\frac{2L^2}{\sum_{j=0}^i \lambda_j} + L \sqrt{\frac{2 \lambda_i \log(d)}{\sum_{j=0}^i \lambda_j}}] + \frac{2 \log(d)}{m} \sum_{j=0}^{m-1} \lambda_j  + \frac{\log(d) \lambda_m}{m}\\
\leq & \frac{1}{m} \sum_{i=1}^m [\frac{2L^2}{c (2\sqrt{i} - 1)} + L \sqrt{\frac{2\log(d)}{i+1}}] + \frac{2 \log(d)}{m} c (2 \sqrt{m} + 1)\\
\leq & \frac{2L^2}{c} \frac{1}{m}(\sqrt{m} + \frac{1}{2} \log(m) + \log(2)) + L \sqrt{2 \log(d)} \frac{2}{\sqrt{m}} + 4 c \log(d)\frac{1}{\sqrt{m}} + \frac{2c \log(d)}{m}\\
= & \frac{1}{\sqrt{m}} [ \frac{2L^2}{c} + 4c\log(d) + 2L\sqrt{2\log(d)} ] + \frac{\log(m)}{m} [\frac{L^2}{c}] + \frac{1}{m} [\frac{2\ln(2)L^2}{c} + 2 c \log(d)]\\
\end{array}
\end{displaymath}
where the second inequality uses $\frac{1}{\sum_{j=0}^{i} \lambda_j} \leq \frac{1}{2c \sqrt{i} - c}$, $\frac{\lambda_i}{\sum_{j=0}^{i} \lambda_j} \leq \frac{1}{i+1}$ and $\sum_{j=0}^{m} \lambda_j \leq 2c \sqrt{m} + c$; and the third inequality uses the fact that $\sum_{i=1}^m \frac{1}{2\sqrt{i} - 1} \leq \sqrt{m} + \frac{1}{2}\log(m) + \log(2)$ and $\sum_{i=1}^m \frac{1}{\sqrt{i+1}} \leq 2\sqrt{m}$, which follows from using the integrals to upper bound the summations. Setting $c = L \sqrt{\frac{1}{2 \log(d)}}$ minimizes the factor multiplying the $\frac{1}{\sqrt{m}}$ term. This leads to $\epsilon_{\textrm{regret}}(m) \leq L\sqrt{2 \log(d)} [\frac{6}{\sqrt{m}} + \frac{\log(m)}{m} + \frac{1+2\ln(2)}{m}]$, $\epsilon_{\textrm{loo-stable}}(m) \leq L \sqrt{2 \log(d)}[\frac{2}{2\sqrt{m}-1} + \frac{1}{\sqrt{m+1}}]$ and $\epsilon_{\textrm{erm}}(m) \leq L \sqrt{\frac{2\log(d)}{m}} (1 + \frac{1}{2\sqrt{m}})$. Plugging in $L=\frac{B}{2}$ proves the statements in the theorem.
\end{proof}

This theorem establishes the following:
\begin{corollary}
Any learning problem with a finite hypothesis class (and bounded instantaneous regret) is online learnable with a (potentially randomized) uniform-LOO stable RERM.
\end{corollary}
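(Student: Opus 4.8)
The plan is to obtain this as an immediate consequence of Theorem~\ref{thmWM}. Given a learning problem with a finite hypothesis class $\mathcal{H}$, I would set $d = |\mathcal{H}|$ and identify each of the $d$ hypotheses with one of the $d$ experts in the Hedge/Weighted Majority setup. The hypothesis of the corollary that the instantaneous regret is bounded directly supplies the constant $B = \sup_{z \in \mathcal{Z}, h,h' \in \mathcal{H}}|f(h,z)-f(h',z)|$ required by the theorem.

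With this identification, Theorem~\ref{thmWM} does all the work: running Hedge with the prescribed regularization constants $\lambda_t = B\sqrt{\frac{1}{8\log(d)\max(1,t)}}$ is exactly a Randomized RERM over $\Theta$ (the set of distributions over the $d$ experts) with the KL-to-uniform regularizer $r_t = \lambda_t KL(\cdot||U)$, and the theorem certifies that this algorithm is both uniform-LOO stable at rate $O(1/\sqrt{m})$ and no-regret at rate $O(1/\sqrt{m})$. Since this rate is $o(1)$, the no-regret property holds by definition, so the problem is online learnable, and the witnessing algorithm is precisely a (randomized) uniform-LOO stable RERM. This establishes both halves of the claim simultaneously.

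The only point requiring care is verifying that the hypotheses of Theorem~\ref{thmWM} are met under the identification of hypotheses with experts: that a finite $\mathcal{H}$ yields a finite expert set, and that the bounded-instantaneous-regret assumption of the general setting gives a finite $B$. Both are immediate, so there is no substantive obstacle; the corollary is essentially a restatement of Theorem~\ref{thmWM} at the level of learnability rather than explicit rates. I would therefore state the proof in a single sentence invoking the theorem, optionally remarking that the resulting no-regret rate is $O(B\sqrt{\log(d)/m})$.
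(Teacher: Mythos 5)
Your proposal is correct and takes essentially the same route as the paper: the corollary is presented there as an immediate consequence of Theorem~\ref{thmWM}, obtained exactly as you describe by identifying the $d = |\mathcal{H}|$ hypotheses with the experts and letting the bounded-instantaneous-regret assumption supply the constant $B$. The paper offers no argument beyond this invocation, so your one-sentence proof (with the optional remark on the $O(B\sqrt{\log(d)/m})$ no-regret rate) matches the intended justification.
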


In Section \ref{secStabilityNecessary}, we will also demonstrate that when $\mathcal{H}$ is infinite, but can be ``finitely approximated'' well enough with respect to the loss $f$, then the problem is also online learnable via a (potentially randomized) uniform-LOO stable RERM.

\section{Is Uniform LOO Stability Necessary?}
We now restrict our attention to symmetric algorithms where we have shown that uniform-LOO stability is sufficient for online learnability. We start by giving instructive examples that illustrate that in fact uniform-LOO stability might be necessary to achieve no regret.

\begin{example}
There exists a problem that is learnable in the batch setting with an ERM that is universal all-i-LOO stable. However that problem is not online learnable (by any deterministic algorithm) and there does not exist any (deterministic) algorithm that can be both uniform LOO stable and always AERM. When allowing randomized algorithms (convexifying the problem), the problem is online learnable via a uniform LOO stable RERM but there exists (randomized) universal all-i-LOO stable RERM that are not uniform-LOO stable that cannot achieve no regret.
\end{example}
\begin{proof}
This example was studied in both \citep{Kutin02,ShalevShwartz2009a}. Consider the hypothesis space $\mathcal{H} = \{0,1\}$, the instance space $\mathcal{Z} = \{0,1\}$ and the loss $f(h,z) = |h-z|$. As was shown in \citep{ShalevShwartz2009a} for the batch setting, an ERM for this problem is universally consistent and universally all-i-LOO stable, because removing a data point $z$ from the dataset can change the hypothesis only if there's an equal number of 0's and 1's (plus or minus one), which occurs with probability $O(\frac{1}{\sqrt{m}})$. \citet{ShalevShwartz2009a} also showed that the only uniform LOO stable algorithms on this problem must be constant (i.e. always return the same hypothesis $h$, regardless of the dataset), at least for large enough dataset, and hence cannot be an AERM.

It is also easy to see that this problem is not online learnable with any deterministic algorithm $\textbf{A}$. Consider an adversary who has knowledge of $\textbf{A}$ and picks the data points $z_i = 1-\textbf{A}(S_{i-1})$. Then algorithm $\textbf{A}$ incurs loss $\sum_{i=1}^m f(\textbf{A}(S_{i-1}),z_i) = m$, while there exists a hypothesis $h$ that achieves $\sum_{i=1}^m f(h,z_i) \leq \frac{m}{2}$. Hence for any deterministic algorithm $\textbf{A}$, there exists a sequence of data points such that $\frac{R_m(\textbf{A})}{m} \geq \frac{1}{2}$ for all $m$.

Now consider allowing randomized algorithms, in that we choose a distribution over $\{0,1\}$.  Allowing randomized algorithms makes the problem linear (and hence convex) in the distribution (by linearity of expectation) and makes the hypothesis space (the space of distributions on $\mathcal{H}$) convex. Let $p$ denote the probability of hypothesis 1. Then the problem can now be expressed with a hypothesis space $p \in [0,1]$ and the loss $f(p,z) = (1-p)z + p(1-z)$. 

This problem is obviously online learnable with a randomized uniform-LOO stable RERM (i.e. Hedge) that is uniform-LOO stable at rate $O(\frac{1}{\sqrt{m}})$ and no-regret at rate $O(\frac{1}{\sqrt{m}})$ using our previous results.

Even under this change, the previous ERM algorithm that is universally all-i-LOO stable would still choose the same hypothesis as before, i.e. $p$ would be always 0 or 1 and would not be uniform-LOO stable. That would also be the case even if we make it pick $p=\frac{1}{2}$ or some other intermediate value when there is an equal number of 0's and 1's. If we make it pick such intermediate value it would still be universal all-i-LOO stable as the hypothesis would still only change with small probability $O(\frac{1}{\sqrt{m}})$. However such algorithm cannot achieve no regret. Again if we pick the sequence $z_i = round(1-\textbf{A}(S_{i-1}))$, then whenever $i$ is even, the ERM use an odd number of data points and it must pick either 0 or 1 and would incur loss of 1. When $i$ is odd, there will be an equal number of 0's and 1's in the dataset (by the fact $\textbf{A}$ chooses the ERM at odd steps) and no matter what $p$ it picks it would incur loss of at least $\frac{1}{2}$. Thus $\frac{R_m(A)}{m} \geq \frac{1}{4}$ for all $m$.

We can also consider the following randomized RERM algorithm that uses only a convex regularizer:
$\textbf{A}(S) = \argmin_{p \in [0,1]} \sum_{i=1}^m f(p,z_i) + \sum_{i=0}^m \lambda_i |p - \frac{1}{2}|$. Let $\overline{z} = \frac{1}{m} \sum_{i=1}^m z_i$ and $\overline{\lambda} = \frac{1}{m} \sum_{i=0}^m \lambda_i$. Using the subgradient of this objective, we can easily show that $\textbf{A}(S)$ picks $\frac{1}{2}$ if $\overline{z} \in [\frac{1 - \overline{\lambda}}{2},\frac{1 + \overline{\lambda}}{2}]$, and otherwise picks either $p = 1$ if $\overline{z} > \frac{1 + \overline{\lambda}}{2}$ and $p = 0$ if $\overline{z} < \frac{1 - \overline{\lambda}}{2}$. This algorithm is not uniform-LOO stable, as for any regularizer $\lambda_m$ and large enough $m$ we can pick a dataset $S_m$ such that $S_{m-1}$ has $\overline{z} \in [\frac{1 - \overline{\lambda}}{2},\frac{1 + \overline{\lambda}}{2}]$ but $S_m$ has $\overline{z} \notin [\frac{1 - \overline{\lambda}}{2},\frac{1 + \overline{\lambda}}{2}]$, such that $f(\textbf{A}(S_m),z_m) = 0$ but $f(\textbf{A}(S_{m-1}),z_m) = \frac{1}{2}$. Hence $\epsilon_{\textrm{loo-stable}}(m) \geq \frac{1}{2}$. However it is universal all-i-LOO stable as the hypothesis would still only change with small probability $O(\frac{1}{\sqrt{m}})$ as in the previous case (we need to draw $m$ samples that has number of 1's $\frac{1 - \overline{\lambda}}{2}$ or $\frac{1 + \overline{\lambda}}{2}$, plus or minus one, for the hypothesis to change upon removal of a sample).

Furthermore this algorithm doesn't achieve no regret. Consider the sequence where whenever $\textbf{A}(S_{i-1})$ picks $\frac{1}{2}$ we pick $z_i = 1$ and whenever $\textbf{A}(S_{i-1})$ picks 1 we pick $z_i = 0$. It is easy to see that by the way this sequence is generated that the proportion of 1's $\overline{z}$ in $S_m$ will seek to track the boundary $\frac{1 + \overline{\lambda}}{2}$, where the algorithm switches between $p=\frac{1}{2}$ and $p=1$, as $m$ increases. Since $\overline{\lambda} \rightarrow 0$ as $m \rightarrow \infty$, then in the limit $\overline{z} \rightarrow \frac{1}{2}$. Since the sequence is such that everytime we generate a 0, the algorithm incurs loss of 1 and everytime we generate a 1 it incurs loss of $\frac{1}{2}$, then its average loss converges to $\frac{3}{4}$ but there's a hypothesis that achieves average loss of $\frac{1}{2}$ so the average regret converges to $\frac{1}{4}$. %Suppose $\lambda_m$ is $\Omega(\frac{1}{m})$ and consider the sequence where whenever $\textbf{A}(S_{i-1})$ picks $\frac{1}{2}$ we pick $z_i = 1$ and whenever $\textbf{A}(S_{i-1})$ picks 1 we pick $z_i = 0$. We can show in this case that if $m$ is large enough, then we can always guarantee that $\overline{z}$ at every iteration is within $\frac{1}{m}$ of $\frac{1}{2} + \frac{\lambda_m}{2}$. This implies that we can make the hypothesis picked by $\textbf{A}$ switch between $\frac{1}{2}$ and $1$ back and forth at every step and the algorithm must therefore have an average regret that converges to $\frac{1}{4}$ as in the previous case. If $\lambda_m$ is $o(\frac{1}{m})$ then a simple sequence that switches back and forth between 0 and 1 will make the average regret converge to $\frac{1}{4}$, since $\overline{z}$ will switch back and forth between $\frac{1}{2}$ and $\frac{1}{2} + \frac{1}{2m}$ which will make $\textbf{A}$ always switch back and forth between hypothesis $\frac{1}{2}$ and 1 for large enough $m$.
%Say at iteration $m$, \overline{z} \in [\frac{1}{2} + \frac{\lambda_m}{2} - \frac{1}{m}, \frac{1}{2} + \frac{\lambda_m}{2} + \frac{1}{m}]
%
%say \frac{1}{2} + \frac{\lambda_m}{2} - \frac{1}{m} \leq \overline{z} \leq \frac{1}{2} + \frac{\lambda_m}{2}
%then we pick 1 and becomes 
%
%\overline{z} \geq \frac{1}{2} + \frac{m\lambda_m + 1}{2(m+1)} - \frac{1}{m+1}
%\geq \frac{1}{2} + \frac{(m+1)\lambda_{m+1}}{2(m+1)} - \frac{1}{m+1}
%
%since
%\lambda_m \geq \lambda_{m+1}
%1 \geq \lambda_{m+1} for large enough m
%
%additionally 
%m \lambda_m \leq (m+1) \lambda_{m+1} since \lambda_m \Omega(\frac{1}{m})
%
%so
%\overline{z} \leq \frac{1}{2} + \frac{m\lambda_m + 1}{2(m+1)}
%\leq \frac{1}{2} + \frac{(m+1)\lambda_{m+1} + 1}{2(m+1)}
%\leq \frac{1}{2} + \frac{\lambda_{m+1}}{2} + \frac{1}{2(m+1)}
%if \lambda_m is \Omega(\frac{1}{m})
\end{proof}

This problem is insightful in a number of ways. First it shows that there are problems that are batch learnable that are not online learnable, but when considering randomized algorithms can become online learnable. Additionally it shows that a RERM that is universal all-i-LOO stable, the next weakest stability notion, cannot be sufficient to guarantee the algorithm achieves no regret. This shows we cannot guarantee no regret for any RERM using only universal all-i-LOO stability or any weaker notion of LOO stability. This also suggests that it might be necessary to have a notion of LOO stability that is at least stronger than all-i-LOO stability to guarantee no regret.

Another point reinforcing the fact that uniform-LOO stability might be necessary is that it is known that online learnability is not equivalent to batch learnability (as shown in the example below). Therefore, necessary stability conditions for online learnability should intuitively be stronger than for batch learnability.

\begin{example}
[Example taken from Adam Kalai and Sham Kakade] There exists a problem that is learnable in the batch setting but not learnable in the online setting by any deterministic or randomized online algorithm.
\end{example}
\begin{proof}
Consider a threshold learning problem on the interval $[0,1]$, where the true hypothesis $h^*$ is such that for some $x^* \in [0,1]$, $h(x) = 2 I(x \geq x^*) - 1$. Given an observation $z = (x,h^*(x))$, we define the loss incured by a hypothesis $h \in \mathcal{H}$ as $L(h,(x,h^*(x))) = \frac{1 - h(x)h^*(x)}{2}$, for $\mathcal{H} = \{ 2I(x \geq t)-1 | t \in [0,1] \}$ the set of all threshold functions on $[0,1]$. Since this is a binary classification problem and the VC dimension of threshold functions is finite (2), then we conclude this problem is batch learnable. In fact by existing results, it is batch learnable by an ERM that is all-i-LOO stable. However in the online setting consider an adversary who picks the sequence of inputs by doing the following binary search: $x_1 = \frac{1}{2}$ and $x_i = x_{i-1} - y_{i-1}2^{-i}$, and $y_{i} = -h_{i}(x_{i})$, so that the observation by the learner at iteration $i$ is $z_i = (x_i, y_i)$. This sequence is constructed so that the learner always incur loss of 1 at each iteration, and after any number of iterations $m$, the hypothesis $h = 2I(x \geq x_{m+1}) - 1$ achieves 0 loss on the entire sequence $z_1, z_2, \dots, z_m$. This implies the average regret of the algorithm is 1 for all $m$. Additionnally even if we allow randomized algorithms such that the prediction at iteration $i$ by the learner is effectively a distribution over $\{-1,1\}$ where $p_i$ denote the probability $P(h_i(x_i) = 1)$ for the distribution over hypotheses chosen by the learner, then the expected loss of the learner at iteration $i$ is $\frac{1 + y_i - 2p_iy_i}{2}$. If $x_i$ are chosen as before but $y_i = -I(p_i \geq 0.5)$. Then again at each iteration the learner must incur expected loss of at least $\frac{1}{2}$ but the hypothesis $h = 2I(x \geq x_{m+1}) - 1$ achieves loss of 0 on the entire sequence $z_1, z_2, \dots, z_m$. Hence the expected average regret is $\geq \frac{1}{2}$ for all $m$, so that with probability 1, the average regret of the randomized algorithm is $\geq \frac{1}{2}$ in the limit as $m$ goes to infinity. Hence we conclude that this problem is not online learnable.
\end{proof}
\subsection{Necessary Stability Conditions for Online Learnability in Particular Settings \label{secStabilityNecessary}}
\subsubsection{Binary Classification Setting}
We now show that if we restrict our attention to the binary classification setting ($f(h,(x,y)) = I(h(x) \neq y)$ for $y \in \{0,1\}$), online learnability is equivalent to the existence of a (potentially randomized) uniform LOO stable RERM.

Our argument uses the notion of Littlestone dimension, which was shown to characterize online learnability in the binary classification setting. \citet{BenDavid09} have shown that a classification problem is online learnable if and only if the class of hypothesis has finite Littlestone dimension.

By our current results we know that if there exists a uniform LOO stable RERM, the classification problem must be online learnable and thus have finite Littlestone dimension. We here show that finite Littlestone dimension implies the existence of a (potentially randomized) uniform LOO stable RERM. To establish this, we use the fact that when $\mathcal{H}$ is infinite but has finite Littlestone dimension ($\Ldim(\mathcal{H}) < \infty$), Weighted Majority can be adapted to be a no-regret algorithm by playing distributions over a fixed finite set of experts (of size $\leq m^{\Ldim(\mathcal{H})}$ when playing for $m$ rounds) derived from $\mathcal{H}$ \citep{BenDavid09}:
\begin{theorem}
For any binary classification problem with hypothesis space $\mathcal{H}$ that has finite Littlestone dimension $\Ldim(\mathcal{H})$ and number of rounds $m$, there exists a Randomized uniform-LOO stable RERM algorithm. In particular, it has no regret at rate $\epsilon_{\textrm{regret}}(t) \leq \sqrt{2 \log(m) \Ldim(\mathcal{H})} [\frac{3}{\sqrt{t}} + \frac{\log(t)}{2t} + \frac{1+2\ln(2)}{2t}]$ for all $t \leq m$.
\end{theorem}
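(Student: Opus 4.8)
The plan is to reduce the infinite class $\mathcal{H}$ to a finite set of experts and then invoke the Hedge guarantee of Theorem \ref{thmWM} essentially verbatim. Since $\Ldim(\mathcal{H})$ is finite, I would appeal to the expert construction of \citet{BenDavid09}: for a known horizon $m$, run the Standard Optimal Algorithm and, for each subset $L \subseteq \{1,\dots,m\}$ of size at most $\Ldim(\mathcal{H})$, define one expert that follows the SOA prediction except that it is forced to flip its label on the rounds in $L$. This yields a finite set $E$ of $d \leq m^{\Ldim(\mathcal{H})}$ deterministic prediction strategies (one per admissible forced-error set $L$), so that $\log d \leq \Ldim(\mathcal{H})\log m$.

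The key property I would extract from this construction is a covering guarantee: for every $h \in \mathcal{H}$ there is an expert $e \in E$ whose predictions agree with $h$ on the entire adversarially-revealed sequence $x_1,\dots,x_m$ (namely, the expert whose forced-error set $L$ is exactly the set of rounds where $h$ disagrees with the SOA, which has size at most $\Ldim(\mathcal{H})$). Because agreement on all $m$ points implies agreement on every prefix, this gives $\min_{e \in E}\sum_{i=1}^t \ell_i(e) \leq \min_{h \in \mathcal{H}}\sum_{i=1}^t f(h,z_i)$ for all $t \leq m$, where $\ell_i(e)$ denotes the $0/1$ loss of expert $e$'s round-$i$ prediction against $y_i$.

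I would then run the anytime Hedge algorithm of Theorem \ref{thmWM} over the $d$ experts. For binary classification the instantaneous regret is bounded by $B=1$, so Theorem \ref{thmWM} immediately certifies that this algorithm is a Randomized uniform-LOO stable RERM and bounds its regret against the best expert by $\sqrt{2\log d}\,[\tfrac{3}{\sqrt t}+\tfrac{\log t}{2t}+\tfrac{1+2\ln 2}{2t}]$ for all $t \leq m$. Chaining this with the covering inequality converts regret against $E$ into regret against $\mathcal{H}$, and substituting $\log d \leq \Ldim(\mathcal{H})\log m$ yields exactly the claimed rate.

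The main obstacle I anticipate is the mismatch between the paper's loss functional $f(h,z)$, which depends only on a fixed $h$ and $z$, and the fact that the experts here are \emph{adaptive} strategies whose round-$i$ prediction depends on the observed history rather than on a fixed element of $\mathcal{H}$. I would resolve this by re-reading Theorem \ref{thmWM} at the level of its per-round losses: its Hedge/RERM analysis only uses that each expert's cumulative loss $\sum_j \ell_j(e)$ is a bounded quantity fed into the exponential weights, never that $\ell_j(e)=f(e,z_j)$ for a fixed hypothesis. Hence replacing $f(h_i,z_j)$ by $\ell_j(e_i)$ throughout leaves the stability, AERM, and regret bounds intact. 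The remaining care is that the expert set and thus $d$ depend on $m$, so the statement is made for a fixed known $m$ and the per-round bound is asserted only for $t \leq m$, consistent with both the anytime tuning of $\lambda_t$ in Theorem \ref{thmWM} and the prefix form of the covering guarantee.
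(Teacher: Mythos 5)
Your proposal is correct and follows essentially the same route as the paper: construct the finite expert set of \citet{BenDavid09} (of size at most $m^{\Ldim(\mathcal{H})}$ for a known horizon $m$) and run the Hedge/Weighted Majority Randomized RERM of Theorem \ref{thmWM} on it with $B=1$, so that $\log d \leq \Ldim(\mathcal{H})\log m$ gives the stated rate for all $t \leq m$. Your additional care about the experts being adaptive strategies rather than fixed elements of $\mathcal{H}$ addresses a point the paper's two-sentence proof leaves implicit, and your resolution (the Hedge/RERM analysis only needs bounded per-round expert losses) is the right one.
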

\begin{proof}
The algorithm proceeds by constructing the same set of expert as in \citet{BenDavid09} from $\mathcal{H}$, which has number of experts $\leq m^{\Ldim(\mathcal{H})}$ for $m$ rounds. The previously mentioned Weighted Majority algorithm on this set achieves no regret at rate $\epsilon_{\textrm{regret}}(t) \leq \sqrt{2 \log(m) \Ldim(\mathcal{H})} [\frac{3}{\sqrt{t}} + \frac{\log(t)}{2t} + \frac{1+2\ln(2)}{2t}]$ for all $t \leq m$ (since the maximum instantaneous regret is 1) and is a Randomized uniform-LOO stable RERM as shown in theorem \ref{thmWM}.
\end{proof}

This result implies that finite littlestone dimension is equivalent to the existence of a (potentially randomized) uniform LOO stable RERM, and therefore that online learnability in the binary classification setting is equivalent to the existence of a (potentially randomized) uniform LOO stable RERM:
\begin{corollary}
A binary classification problem is online learnable if and only if there exists a (potentially randomized) uniform-LOO stable RERM.
\end{corollary}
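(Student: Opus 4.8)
The plan is to prove the two implications separately, assembling results already in hand; both directions route through the Littlestone-dimension characterization of online learnability due to \citet{BenDavid09}.

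For sufficiency (existence of a uniform-LOO stable RERM implies online learnability), I would first observe that any uniform-LOO stable algorithm is online stable at rate $\epsilon_{\textrm{on-stable}}(m) \leq \epsilon_{\textrm{loo-stable}}(m)$, as already noted. Hence a uniform-LOO stable RERM is in particular an online stable RERM, and Theorem \ref{thmRAERM} immediately bounds its average regret by an $o(1)$ quantity, so the problem is online learnable. For the randomized case I would invoke the extension of the stability, AERM, and no-regret notions to Randomized RERM developed in Section \ref{secWM}: under the convention $f(\mathbf{A}(S),z) = \mathbb{E}_{h \sim \mathbf{A}(S)}[f(h,z)]$, the statement and proof of Theorem \ref{thmRAERM} apply verbatim, so a randomized uniform-LOO stable RERM also yields no regret in expectation (and hence, by the boundedness of the instantaneous regret, with probability $1$).

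For necessity (online learnability implies existence of a uniform-LOO stable RERM), I would invoke \citet{BenDavid09}: a binary classification problem is online learnable if and only if $\Ldim(\mathcal{H}) < \infty$. Thus online learnability forces $\Ldim(\mathcal{H})$ to be finite, at which point the theorem established immediately above the corollary supplies a Randomized uniform-LOO stable RERM, obtained by running the adapted Weighted Majority of \citet{BenDavid09} over the finite expert set of size $\leq m^{\Ldim(\mathcal{H})}$, which is a Randomized uniform-LOO stable RERM by Theorem \ref{thmWM}. Chaining these two implications produces the desired algorithm.

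Given the preparatory theorems, the corollary is essentially an assembly, so I expect the only delicate point to be the horizon dependence of the expert construction: the rates carry a $\sqrt{\log m}$ factor arising from the expert-set cardinality $m^{\Ldim(\mathcal{H})}$, and the expert set itself is built for a fixed number of rounds $m$. I would need to confirm that this still constitutes a \emph{single} uniform-LOO stable RERM with an honest $o(1)$ rate in $t$ -- either by noting that $\sqrt{\log m}\,/\sqrt{m} = o(1)$ and treating the horizon-indexed family as witnessing existence, or, if a horizon-free algorithm is wanted, by composing the construction with a standard doubling schedule over the horizon while checking that both the uniform-LOO stability rate and the RERM regularizer bound $\rho_i$ remain $o(1)$ under the restart. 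All remaining steps are direct appeals to Theorem \ref{thmRAERM}, Theorem \ref{thmWM}, and \citet{BenDavid09}.
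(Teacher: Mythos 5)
Your proposal is correct and takes essentially the same route as the paper: sufficiency by noting uniform-LOO stability implies online stability and invoking Theorem \ref{thmRAERM} (extended to randomized algorithms via the convention $f(\mathbf{A}(S),z) = \mathbb{E}_{h \sim \mathbf{A}(S)}[f(h,z)]$), and necessity via the \citet{BenDavid09} Littlestone-dimension characterization chained with the theorem immediately preceding the corollary, whose expert-set Weighted Majority construction is a Randomized uniform-LOO stable RERM by Theorem \ref{thmWM}. Your flagged worry about horizon dependence is fair, but the paper resolves it exactly the first way you suggest---treating the horizon-indexed construction, with its $\sqrt{\log m}$ factor and rates valid for all $t \leq m$, as witnessing existence---so no doubling argument is needed to match the paper's claim.
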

\subsubsection{Problems with Sub-Exponential Covering}
For any $\epsilon > 0$, let $\mathcal{C}_\epsilon = \{ C \subseteq \mathcal{H} | \forall h' \in \mathcal{H}, \exists h \in C  s.t. \forall z \in \mathcal{Z}: |f(h,z) - f(h',z)| \leq \epsilon  \}$. $\mathcal{C_\epsilon}$ is the set of all subsets $C$ of $\mathcal{H}$ such that for any $h' \in \mathcal{H}$, we can find an $h \in C$ that has loss within $\epsilon$ of the loss of $h'$ at all $z \in \mathcal{Z}$.

We define the $\epsilon$-covering number of the tuple $(\mathcal{H}, \mathcal{Z}, f)$ as $N(\mathcal{H}, \mathcal{Z}, f, \epsilon) = \inf_{C \in \mathcal{C}_\epsilon} |C|$, i.e. the minimal number of hypotheses needed to cover the loss of any hypothesis in $\mathcal{H}$ within $\epsilon$. We will show that we can guarantee no-regret with a Randomized uniform-LOO stable RERM algorithm (e.g. Hedge) as long as there exists a sequence $\{\epsilon_i\}_{i=0}^\infty$ that is $o(1)$ and such that for any number of rounds $m$: $N(\mathcal{H}, \mathcal{Z}, f, \epsilon_m)$ is $o(\exp(m))$.
\begin{theorem}
Any learning problem (with instantaneous regret bounded by $B$) where there exists a sequence $\{\epsilon_m\}_{m=0}^\infty$ that is $o(1)$ and such that $\{ N(\mathcal{H}, \mathcal{Z}, f, \epsilon_m) \}_{m=0}^\infty$ is $o(\exp(m))$, is online learnable with a Randomized uniform-LOO stable RERM algorithm. In particular, when playing for $m$ rounds it has no regret at rate $\epsilon_{\textrm{regret}}(t) \leq B\sqrt{2 \log(N(\mathcal{H}, \mathcal{Z}, f, \epsilon_m))} [\frac{3}{\sqrt{t}} + \frac{\log(t)}{2t} + \frac{1+2\ln(2)}{2t}] + \epsilon_m$ for all $t \leq m$.
\end{theorem}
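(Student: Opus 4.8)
The plan is to reduce to the finite-expert case already settled by Theorem \ref{thmWM}. First I would fix the horizon $m$ and, using the sub-exponential covering hypothesis, select a cover $C \in \mathcal{C}_{\epsilon_m}$ achieving $|C| = N(\mathcal{H},\mathcal{Z},f,\epsilon_m)$; this set is finite, since $\{N(\mathcal{H},\mathcal{Z},f,\epsilon_m)\}$ being $o(\exp(m))$ forces each term to be finite. I would then instantiate the Hedge/Weighted-Majority algorithm of Theorem \ref{thmWM} on the finite set of experts $C$, treating each $h \in C$ as an expert. By that theorem this algorithm is a Randomized uniform-LOO stable RERM over the set $\Theta$ of distributions on $C$, and because the instantaneous regret against experts in $C$ is bounded by $B$, it achieves no regret relative to the best expert in $C$ at rate $B\sqrt{2\log|C|}\,[\frac{3}{\sqrt{t}}+\frac{\log t}{2t}+\frac{1+2\ln 2}{2t}]$ for every $t \le m$.

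The second step converts regret against $C$ into regret against the full class $\mathcal{H}$. Let $h^{*} = \argmin_{h \in \mathcal{H}} \sum_{i=1}^{t} f(h,z_i)$. By the covering property there is $h_C \in C$ with $|f(h_C,z_i)-f(h^{*},z_i)| \le \epsilon_m$ for every $i$, whence $\min_{h \in C}\sum_{i=1}^{t} f(h,z_i) \le \sum_{i=1}^{t} f(h_C,z_i) \le \min_{h \in \mathcal{H}}\sum_{i=1}^{t} f(h,z_i) + t\epsilon_m$. Writing the regret against $\mathcal{H}$ as the regret against $C$ plus the nonnegative quantity $\min_{h \in C}\sum_{i} f(h,z_i) - \min_{h \in \mathcal{H}}\sum_{i} f(h,z_i)$, the displayed inequality bounds that quantity by $t\epsilon_m$, so dividing by $t$ contributes exactly an additive $\epsilon_m$ to the average regret. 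Combining with the per-$C$ bound yields the claimed rate $\epsilon_{\textrm{regret}}(t) \le B\sqrt{2\log N(\mathcal{H},\mathcal{Z},f,\epsilon_m)}\,[\frac{3}{\sqrt{t}}+\frac{\log t}{2t}+\frac{1+2\ln 2}{2t}]+\epsilon_m$ for all $t \le m$, and the uniform-LOO stability and RERM structure are inherited directly from Theorem \ref{thmWM}.

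Finally I would verify that this is genuinely no regret, i.e. $o(1)$, evaluated at $t = m$. The dominant term is $B\sqrt{2\log N}\cdot \frac{3}{\sqrt{m}} = O(\sqrt{\log N / m})$, which tends to $0$ precisely because $N = o(\exp(m))$ gives $\log N = o(m)$; together with $\epsilon_m = o(1)$ the entire bound vanishes as $m \to \infty$.

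The main obstacle is conceptual rather than computational: the algorithm must be horizon-dependent, since the cover $C$ and hence the expert set are chosen as functions of $m$, exactly as in the Littlestone-dimension construction of the previous subsection. The sub-exponential condition on the covering number is what makes the reduction succeed, as it is precisely the regime in which $\sqrt{\log N / m}$ still vanishes; an exponentially large covering number would leave $\log N = \Theta(m)$, an irreducible constant in the leading term, and the argument would break down.
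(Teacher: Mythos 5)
Your proposal is correct and follows essentially the same route as the paper's proof: fix the horizon $m$, take an $\epsilon_m$-cover $C$ with $|C| = N(\mathcal{H},\mathcal{Z},f,\epsilon_m)$, run the Hedge construction of Theorem \ref{thmWM} on $C$ (inheriting the Randomized uniform-LOO stable RERM structure and the $B\sqrt{2\log|C|}$ regret rate for all $t \leq m$), and convert regret against $C$ into regret against $\mathcal{H}$ at an additive cost of $\epsilon_m$ via the covering property. Your final paragraphs verifying that the bound is $o(1)$ under the sub-exponential covering assumption simply make explicit what the paper leaves implicit.
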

\begin{proof}
Suppose we know we must do online learning for $m$ rounds. Then we can construct an $\epsilon_m$-cover $C$ of $(\mathcal{H}, \mathcal{Z}, f)$ such that $C \subseteq \mathcal{H}$ and $|C| = N(\mathcal{H}, \mathcal{Z}, f, \epsilon_m)$. From the previous theorem, we know that running Hedge on the set $C$ guarantees that $\frac{1}{t}\sum_{i=1}^t \mathbb{E}_{h_i \sim P_{\theta_i}}[f(h_i,z_i)] - \inf_{h \in C} \frac{1}{t}\sum_{i=1}^t f(h,z_i) \leq B\sqrt{2 \log(N(\mathcal{H}, \mathcal{Z}, f, \epsilon_m))} [\frac{3}{\sqrt{t}} + \frac{\log(t)}{2t} + \frac{1+2\ln(2)}{2t}]$ for all $t \leq m$. By definition of $C$, $\inf_{h \in C} \frac{1}{t} \sum_{i=1}^t f(h,z_i) \leq \inf_{h \in \mathcal{H}} \frac{1}{t} \sum_{i=1}^t f(h,z_i) + \epsilon_m$ for all $t \leq m$. So we conclude $\epsilon_{\textrm{regret}}(t) \leq B\sqrt{2 \log(N(\mathcal{H}, \mathcal{Z}, f, \epsilon_m))} [\frac{3}{\sqrt{t}} + \frac{\log(t)}{2t} + \frac{1+2\ln(2)}{2t}] + \epsilon_m$ for all $t \leq m$.
\end{proof}

This theorem applies to a large number of settings. For instance, if we have a problem where $f(\cdot,z)$ is $K$-Lipschitz continuous at all $z \in \mathcal{Z}$ with respect to some norm $||\cdot||$ on $\mathcal{H}$, and $\mathcal{H} \subseteq \mathbb{R}^d$ for some finite $d$ and has bounded diameter $D$ under $||\cdot||$ (i.e. $\sup_{h,h' \in \mathcal{H}} ||h-h'|| \leq D$). Then $N(\mathcal{H}, \mathcal{Z}, f, \epsilon)$ is $O(K (\frac{D}{\epsilon})^d)$ for all $\epsilon \geq 0$. Choosing $\epsilon_m = \frac{1}{m}$ implies we can achieve no regret at rate $\epsilon_{\textrm{regret}}(t) \leq O(B \sqrt{\frac{\log(K) + d\log(mD)}{t}})$ for all $t \leq m$. This notion also allows to handle highly discontinuous loss functions. For instance consider the case where $\mathcal{Z} = \mathcal{H} = \mathbb{R}$ and the loss $f(h,z) = 1 - I( h \in \mathbb{Q} ) I(z \in \mathbb{Q}) - I( h \notin \mathbb{Q} ) I(z \notin \mathbb{Q})$, i.e. the loss is 0 if both $h$ and $z$ are rational, or both irrational, and the loss is 1 is one is rational and the other irrational. In this case, the set $C = \{1,\sqrt{2}\}$ is an $\epsilon$-cover of $\{\mathcal{H}, \mathcal{Z}, f\}$ for any $\epsilon > 0$ and thus we can achieve no-regret at rate $O(\frac{1}{\sqrt{m}})$ by running Hedge on the set $C$.

\section{Conclusions and Open Questions}
In this paper we have shown that popular online algorithms such as FTL, FTRL, Mirror Descent, gradient-based methods and randomized algorithms like Weighted Majority and Hedge can all be analyzed purely in terms of stability properties of the underlying batch learning algorithm that picks the sequence of hypotheses (or distribution over hypotheses). In particular, we have introduced the notion of online stability, which is sufficient to guarantee online learnability in the general learning setting for the class of RERM and RSLM algorithm. Our results allow to relate a number of learnability results derived for the batch setting to the online setting. There are a number of interesting open questions related to our work. First, it is still an open question to know whether for the general class of always AERM (at $o(1)$ rate) it is sufficient to be online stable (at $o(1)$ rate) to guarantee no regret, or show a counter-example that proves otherwise. The presented examples seem to suggest that a problem is online learnable only if there exists a uniform-LOO stable or online stable (and always AERM) algorithm, or at least with some form of LOO stability in between online stable and all-i-LOO stable. This has been verified in the binary classification setting where we have shown that online learnability is equivalent to the existence of a potentially randomized uniform-LOO stable RERM. While we haven't been able to provide necessary conditions for online learnability in the general learning setting, we have shown that all problems with a sub-exponential covering are all online learnable with a potentially randomized uniform-LOO stable RERM. An interesting open question is whether the notion of sub-exponential covering we have introduced turns out to be equivalent to online learnability in the general learning setting. If this is the case, this would establish immediately that existence of a (potentially randomized) uniform-LOO stable RERM is both sufficient and necessary for online learnability in the general learning setting. 

\bibliography{biblio.bib}
\end{document}